\newtheorem{lemma}{Lemma}
\newtheorem{proposition}{Proposition}
\def\R{\mathbb{R}}
\def\GP{\mathcal{GP}}
\def\E{\mathbf{E}}
\def\Var{\mbox{{\bf Var}}}
\def\Cov{\mbox{{\bf Cov}}}
\def\Pr{\mbox{{\bf P}}}
\def\Prob{\Pr}
\def\Pois{\text{Poisson}}
\def\Unif{U}
\def\Normal{\mathcal{N}}
\def\var{\Var}
\def\bigO{\mathcal O}
\let\oldnl\nl
\newcommand{\nonl}{\renewcommand{\nl}{\let\nl\oldnl}}
\newenvironment{talign}
 {\align}
 {\endalign}
\newenvironment{talign*}
 {\csname align*\endcsname}
 {\endalign}
\title{Gaussian Processes for Survival Analysis}
\begin{document}

\author{
  Tamara Fern\'andez
  \\
  Department of Statistics\\
  University of Oxford\\
  Oxford, UK. \\
  {\texttt{fernandez@stat.ox.ac.uk}} 
  \And  
  Nicol\'as Rivera \\
  Department of Informatics. \\
  King's College London \\
  London, UK.\\
  {\texttt{nicolas.rivera@kcl.ac.uk}} \\
  \And
  Yee Whye Teh \\
  Department of Statistics. \\
  University of Oxford.\\
  Oxford, UK.\\
  {\texttt{y{\small{.}}w{\small{.}}teh@stats.ox.ac.uk}} \\
}

\maketitle

\begin{abstract}
We introduce a semi-parametric Bayesian model for survival analysis.
The model is centred on a parametric baseline hazard, and uses a
Gaussian process to model variations away from it nonparametrically,
as well as dependence on covariates. As opposed to many other methods in
survival analysis, our framework does not impose unnecessary
constraints in the hazard rate or in the survival function.
Furthermore, our model handles left, right and interval censoring
mechanisms common in survival analysis.  We propose a MCMC algorithm
to perform inference and an approximation scheme based on random
Fourier features to make computations faster. We report experimental
results on synthetic and real data, showing that our model performs
better than competing models such as Cox proportional hazards,
ANOVA-DDP and random survival forests.
\end{abstract}

\section{Introduction}

Survival analysis is a branch of statistics focused on the study of time-to-event data, usually called survival times. This type of data appears in a wide range of applications such as failure times in mechanical systems, death times of patients in a clinical trial or duration of unemployment in a population. One of the main objectives of survival analysis is the estimation of the so-called survival function and the hazard function. If a random variable has density function $f$ and cumulative distribution function $F$, then its survival function $S$ is $1-F$, and its hazard $\lambda$ is $f/S$. While the survival function $S(t)$ gives us the probability a patient survives up to time $t$, the hazard function $\lambda(t)$ is the instant probability of death given that she has survived until $t$.

Due to the nature of the studies in survival analysis, the data contains several aspects that make inference and prediction hard. One important characteristic of survival data is the presence of many covariates. Another distinctive flavour of survival data is the presence of censoring. A survival time is censored when it is not fully observable but we have an upper or lower bound of it. For instance, this happens in clinical trials when a patient drops out the study.

There are many methods for modelling this type of data. Arguably, the most popular is the Kaplan-Meier estimator \cite{kaplan1958nonparametric}. The Kaplan-Meier estimator is a very simple, nonparametric estimator of the survival function. It is very flexible and easy to compute, it handles censored times and requires no-prior knowledge of the nature of the data. Nevertheless, it cannot handle covariates naturally and no prior knowledge can be incorporated. A well-known method that incorporates covariates is the Cox proportional hazard model \cite{cox1972regression}. Although this method is very popular and useful in applications, a drawback of it, is that it imposes the strong assumption that the hazard curves are proportional and non-crossing, which is very unlikely for some data sets. 

There is a vast literature of Bayesian nonparametric methods for survival analysis \cite{hjort2010bayesian}. Some examples include the so-called Neutral-to-the-right priors \cite{doksum1974tailfree}, which models survival curves as $e^{-\tilde \mu((0,t])}$, where $\tilde \mu$ is a completely random measure on $\R^+$. Two common choices for $\tilde \mu$ are the Dirichlet process \cite{ferguson1973bayesian} and the beta-Stacy process \cite{walker1997beta}, the latter, being a bit more tractable due its conjugacy. Other alternatives place a prior on the hazard function, one example of this, is the extended gamma process \cite{dykstra1981bayesian}. The weakness of the above methods is that there is no  natural nor direct way to incorporate covariates and thus, they have not been extensively used by practitioners of survival analysis. More recently, \cite{de2009bayesian} developed a new model called ANOVA-DDP which mixes ideas from ANOVA and Dirichlet processes. This method successfully  incorporates covariates without imposing strong constraints, though it is not clear how to incorporate expert knowledge. Within the context of Gaussian process, a few models has been considered, for instance \cite{martino2011approximate} and \cite{joensuu2012risk}. Nevertheless these models fail to overcome the proportional hazard assumption, which corresponds to one of the aims of this work. Recently, we became aware of the work of \cite{barrett2013gaussian}, which uses a so-called accelerated failure times model.  Here, the dependence of the failure times on covariates is modelled by rescaling time, with the rescaling factor modelled as a function of covariates with a Gaussian process prior.  This model is different from our proposal, and is more complex to study and to work with. 

Lastly, another well-known method is Random Survival Forest \cite{ishwaran2008random}. This can be seen as a generalisation of Kaplan Meier estimator to several covariates. It is fast and flexible, nevertheless it cannot incorporate expert knowledge and lacks interpretation which is fundamental for survival analysis.

In this paper we introduce a new semiparametric Bayesian model for survival analysis. Our model is able to handle censoring and covariates. Our approach models the hazard function as the multiplication of a parametric baseline hazard and a nonparametric part. The parametric part of our model allows the inclusion of expert knowledge and provides interpretability, while the nonparametric part allow us to handle covariates and to amend incorrect or incomplete prior knowledge. The nonparametric part is given by a non-negative function of a Gaussian process on $\R^+$.

Giving the hazard function $\lambda$ of a random variable $T$, we sample from it by simulating the first jump of a Poisson process with intensity $\lambda$. In our case, the intensity of the Poisson process is a function of a Gaussian process, obtaining what is called a Gaussian Cox process. One of the main difficulties of working with Gaussian Cox processes is the problem of learning the `true' intensity given the data because, in general, it is impossible to sample the whole path of a Gaussian process. Nevertheless, exact inference was proved to be tractable by \cite{adams2009tractable}. Indeed, the authors developed an algorithm by exploiting a nice trick which allows them to make inference without sampling the whole Gaussian process but just a finite number of points.

In this paper, we study basic properties of our prior. We also provide an inference algorithm based in a sampler proposed by \cite{teh2011gaussian} which is a refined version of the algorithm presented in \cite{adams2009tractable}. To make the algorithm scale we introduce a random Fourier features to approximate the Gaussian process and we supply the respective inference algorithm.
We demonstrate the performance of our method experimentally by using synthetic and real data.

\section{Model}
Consider a continuous random variable $T$ on $\R^+ = [0,\infty)$, with density function $f$ and cumulative distribution function $F$. Associated with $T$, we have the survival function $S=1-F$ and the hazard function $\lambda = f/S$. The survival function $S(t)$ gives us the probability a patient survives up to time $t$, while the hazard function $\lambda(t)$ gives us the instant risk of patient at time $t$.

We define a Gaussian process prior over the hazard function $\lambda$. In particular, we choose $\lambda(t) = \lambda_0(t) \sigma(l(t))$, where $\lambda_0(t)$ is a baseline hazard function, $l(t)$ is a centred stationary Gaussian process with covariance function $\kappa$, and $\sigma$ is a positive link function. For our implementation, we choose $\sigma$ as the sigmoidal function $\sigma = (1+e^{-x})^{-1}$, which is a quite standard choice in applications. In this way, we generate $T$ as the first jump of the Poisson process with intensity $\lambda$, i.e. $T$ has distribution $\lambda(t)e^{-\int_{0}^t \lambda(s)ds}$. Our model for a data set of i.i.d. $T_i$, without covariates, is
\begin{talign}\label{eqn:model1}
l(\cdot) \sim \GP(0, \kappa),&& \lambda(t)|l,\lambda_0(t) = \lambda_0(t)\sigma(l(t)),  &&T_i|\lambda \stackrel{iid}{\sim}  \lambda(t)e^{-\int_0^{T_i} \lambda(s)ds},
\end{talign}
which can be interpreted as a baseline hazard with a multiplicative nonparametric noise. This is an attractive feature as an expert may choose a particular hazard function and then the nonparametric noise amends an incomplete or incorrect prior knowledge. The incorporation of covariates is discussed later in this section, while censoring is discussed in section~\ref{sec:Inference}.

Notice that $\E(\sigma(X))=1/2$ for a zero-mean Gaussian random variable. Then, as we are working with a centred Gaussian process, it holds that $\E(\lambda(t)) = \lambda_0(t)\E(\sigma(l(t))) = \lambda_0(t)/2$. Hence, we can imagine our model as a random hazard centred in $\lambda_0(t)/2$ with a multiplicative noise. In the simplest scenario, we may take a constant baseline hazard $\lambda_0(t) = 2\Omega$ with $\Omega > 0$. In such case, we obtain a random hazard centred in $\Omega$, which is simply the hazard function of a exponential random variable with mean $1/\Omega$. Another choice might be $\lambda_0(t) = 2\beta t^{\alpha-1}$, which determines a random hazard function centred in $\beta t^{\alpha-1}$, which corresponds to the hazard function of the Weibull distribution, a popular default distribution in survival analysis.  

In addition to the hierarchical model in~\eqref{eqn:model1}, we include hyperparameters to the kernel $\kappa$ and to the baseline hazard $\lambda_0(t)$. In particular for the kernel, it is common to include a length scale parameter and an overall variance. 

Finally, we need to ensure the model we proposed defines a well-defined survival function, i.e. $S(t) \to 0$ as $t$ tends to infinity. This is not trivial as our random survival function is generated by a Gaussian process.  The next proposition, proved in the appendix, states that under suitable regularity conditions, the prior defines proper survival functions.

\begin{proposition}\label{propo:ProperSurvival}
Let $(l(t))_{t \geq 0} \sim \GP(0,\kappa)$ be a stationary continuous Gaussian process. Suppose that $\int_{0}^t \kappa(s)ds = o(t)$. Moreover, assume it exists $K>0$ and $\alpha > 0$ such that $\lambda_0(t) \geq Kt^{\alpha-1}$ for all $t \geq 1$. Let $S(t)$ be the random survival function associated with $(l(t))_{t \geq 0}$, then $\lim_{t \to \infty} S(t) = 0$ with probability 1.
\end{proposition}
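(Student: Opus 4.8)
The plan is to reduce the statement to the almost-sure divergence of a positive integral, control that integral by first and second moments, and fall back on the ergodic theorem in the cases where the moment bound is not sharp enough.

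First I would rewrite $S(t)=\exp\left(-\int_0^t \lambda(s)\,ds\right)$ with $\lambda(s)=\lambda_0(s)\sigma(l(s))\ge 0$. Since the integrand is nonnegative, $t\mapsto\int_0^t\lambda$ is nondecreasing and its limit $I\in[0,\infty]$ exists pathwise, with $S(t)\to e^{-I}$; hence $\lim_t S(t)=0$ is \emph{equivalent} to $I=\infty$. Using the hypothesis $\lambda_0(s)\ge Ks^{\alpha-1}$ for $s\ge 1$, it suffices to prove that $A:=\int_1^\infty s^{\alpha-1}\sigma(l(s))\,ds=\infty$ almost surely. Working with the explicit weight $s^{\alpha-1}$ is convenient because it removes any need for an upper bound on $\lambda_0$, which is not assumed.

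For the main route I set $A(t)=\int_1^t s^{\alpha-1}\sigma(l(s))\,ds$, again nondecreasing with $A(t)\uparrow A$. Because $\E[\sigma(l(s))]=1/2$ for the centred process, $\E[A(t)]=\tfrac12\int_1^t s^{\alpha-1}\,ds=\tfrac{1}{2\alpha}(t^\alpha-1)\to\infty$. The key analytic input is a covariance bound: $\sigma$ is $\tfrac14$-Lipschitz, so by Price's theorem (equivalently, the Hermite expansion of $\sigma$ composed with a Gaussian) one gets $|\Cov(\sigma(l(s)),\sigma(l(u)))|\le\tfrac1{16}|\kappa(|s-u|)|$. Substituting this into $\Var(A(t))=\int_1^t\int_1^t (su)^{\alpha-1}\Cov(\sigma(l(s)),\sigma(l(u)))\,ds\,du$ and aiming to show $\Var(A(t))=o(\E[A(t)]^2)$, Chebyshev's inequality then yields $A(t)\to\infty$ in probability, and monotonicity of $A(t)$ upgrades this to $A=\infty$ almost surely (if $A(t)\to\infty$ in probability and $A(t)\uparrow A$, then $\Pr(A\le N)\le\inf_t\Pr(A(t)\le N)=0$ for every $N$).

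The variance estimate is where I expect the real work. After the change of variables $r=|s-u|$ the problem becomes a Cesàro-type bound of the form $t^{-2\alpha}\int_1^t\int_1^t (su)^{\alpha-1}|\kappa(|s-u|)|\,ds\,du\to 0$, which one would like to derive from $\int_0^t\kappa(s)\,ds=o(t)$. The subtlety is that the hypothesis controls the \emph{signed} integral of $\kappa$, whereas the crude Lipschitz bound produces $|\kappa|$. I would handle this by not discarding signs: expanding $\Cov(\sigma(l(s)),\sigma(l(u)))=\sum_{n\ \mathrm{odd}}c_n^2(\kappa(|s-u|)/\kappa(0))^n$, the linear term reproduces exactly $\int\kappa=o(t)$ (after an integration by parts that absorbs the weight and the $(t-r)$ factor), while the higher-order terms are $O(\kappa^2)$ and require only a Cesàro bound on $\kappa^2$. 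The genuine obstacle is that this can fail in non-ergodic situations --- for example a random-amplitude sinusoid $l(t)=R\cos(t-\Phi)$, for which $\int_0^t\kappa=o(t)$ holds yet $\Var(A(t))$ is of the same order as $\E[A(t)]^2$. For such cases I would switch to an ergodicity argument: the blocks $\xi_n=\int_n^{n+1}\sigma(l(s))\,ds\in(0,1)$ form a stationary sequence with $\E[\xi_0]=1/2$, Birkhoff's pointwise ergodic theorem gives $\tfrac1N\sum_{n<N}\xi_n\to\E[\xi_0\mid\mathcal I]$ almost surely, and this limit is strictly positive a.s. because $\xi_0>0$ a.s. A dyadic blocking of the weighted sum $\sum_n n^{\alpha-1}\xi_n$ (whose unweighted partial sums grow linearly and whose weights have divergent sum for every $\alpha>0$) then forces $A=\infty$ almost surely, using only stationarity and continuity of the process.
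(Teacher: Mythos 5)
Your proposal is correct, and its core route is the paper's: both bound the first two moments of the weighted integral $X_t=\int_1^t s^{\alpha-1}\sigma(l(s))\,ds$ (the paper gets $\E(X_t)=\Theta(t^\alpha)$ and $\Var(X_t)\leq Ct^{2\alpha-1}\int_0^t\kappa$, splitting the double integral at $|x-y|=1$ and invoking its Lemma~1), apply Chebyshev, and upgrade to almost-sure divergence via monotonicity — the paper does this through Borel--Cantelli along a subsequence $t_n$ with $\Pr(B_{t_n})\leq n^{-2}$, while your observation that $\Pr(A\leq N)\leq\inf_t\Pr(A(t)\leq N)=0$ is slicker but equivalent. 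Where you genuinely diverge is in the covariance control and in the fallback. Your Lipschitz/Price bound $|\Cov(\sigma(l(s)),\sigma(l(u)))|\leq\frac{1}{16}|\kappa(|s-u|)|$ is cleaner than the paper's Lemma~1, whose proof implicitly needs $\kappa\geq 0$ (it uses $2\kappa(t-s)xy\leq\kappa(t-s)(x^2+y^2)$) and, to replace $\kappa(0)-\kappa(t-s)$ by $\kappa(0)-\kappa(1)$, effectively a decreasing kernel; your worry about signed $\kappa$ is therefore a real issue with the paper's argument at full stated generality, not a fabricated one. And your ergodic limb is not merely a patch: since $\xi_n=\int_n^{n+1}\sigma(l(s))\,ds$ is a bounded stationary sequence with $\E[\xi_0\mid\mathcal I]>0$ a.s.\ (as $\xi_0>0$ a.s.), Birkhoff gives $\sum_{n\leq N}\xi_n\sim ZN$ with $Z>0$, whence $\sum_n n^{\alpha-1}\xi_n\geq N^{\alpha-1}\sum_{n\leq N}\xi_n\to\infty$ for every $\alpha>0$. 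That argument uses only stationarity and continuity — it never touches $\int_0^t\kappa(s)\,ds=o(t)$ — so it alone proves a strictly stronger proposition in which that hypothesis is deleted; the case split you describe is unnecessary, and the moment computation becomes a quantitative bonus (a rate $S(t_n)\leq e^{-ct_n^\alpha}$) rather than a needed case.

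One factual slip, which your fallback renders harmless: the single random sinusoid $l(t)=R\cos(t-\Phi)$, with $\kappa(s)=\kappa(0)\cos s$, is \emph{not} a counterexample to the moment method. Because $\sigma(x)+\sigma(-x)=1$, the Hermite expansion of $\sigma-\tfrac12$ has only odd terms, so $\Cov(\sigma(l(s)),\sigma(l(u)))=\sum_{k\ \mathrm{odd}}d_k\cos(k(s-u))$ with $d_k\geq 0$ and no constant harmonic; each term integrates against $(su)^{\alpha-1}$ to a square of size $O(t^{2\alpha-2})$, so $\Var(A(t))=o(\E[A(t)]^2)$ after all — indeed $\tfrac1t\int_0^t\sigma(l(s))\,ds\to\tfrac12$ deterministically by the same antisymmetry. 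A genuine failure needs a resonance among spectral atoms summing to zero in an odd combination, e.g.\ $\kappa(s)=a\cos s+b\cos 2s$ with $a,b>0$: the cubic Hermite term contains $\cos^2(s-u)\cos(2(s-u))$, whose constant component makes the covariance non-decaying, $\Var(A(t))\asymp t^{2\alpha}\asymp\E[A(t)]^2$, while $\int_0^t\kappa=O(1)=o(t)$; there the time-average limit $\E[\xi_0\mid\mathcal I]$ is genuinely random (it depends on the resonant phase $\phi_2-2\phi_1$) and only the ergodic argument closes the case. So: correct proof, with the ergodic half being the more general and, in hindsight, the essential one.
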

Note the above proposition is satisfied by the hazard functions of the Exponential and Weibull distributions. Also, the condition $\int_{0}^t \kappa(s)ds = o(t)$ is satisfied by all $\kappa(t)$ decreasing to $0$.
\subsection{Adding covariates}\label{sec:covariates}

We model the relation between time and covariates by the kernel of the Gaussian process prior. A simple way to generate kernels in time and covariates is to construct kernels for each covariate and time, and then perform basic operation of them, e.g. addition or multiplication. Let $(t,X)$ denotes a time $t$ and with covariates $X \in \R^d$. Then for pairs $(t,X)$ and $(s,Y)$ we can construct kernels like
\begin{talign*}
\hat K((t,X),(s,Y)) = \hat K_0(t,s)+\sum_{j=1}^d \hat K_j(X_j,Y_j),
\end{talign*}
or, the following kernel, which is the one we use in our experiments, 
\begin{talign}
K((t,X),(s,Y)) = K_0(t,s)+\sum_{j=1}^d X_jY_j K_j(t,s).
\end{talign}
Observe that the first kernel establishes an additive relation between time and covariates while the second creates an interaction between the value of the covariates and time. More complicated structures that include more interaction between covariates can be considered. We refer to the work of \cite{duvenaud2011additive} for details about the construction and interpretation of the operations between kernels. Observe the new kernel produces a Gaussian process from the space of time and covariates to the real line, i.e it has to be evaluated in a pair of time and covariates. 

The new model to generate $T_i$, assuming we are given the covariates $X_i$, is
\begin{align}\label{eqn:modelCov}
l(\cdot) \sim \GP(0, K),&& \lambda_i(t)|l,\lambda_0(t),X_i = \lambda_0(t)\sigma(l(t,X_i)),  &&T_i|\lambda_i \stackrel{indep}{\sim}  \lambda(T_i)e^{-\int_0^{T_i} \lambda_i(s)ds},
\end{align}

In our construction of the kernel $K$, we choose all kernels $K_j$ as stationary kernels (e.g. squared exponential), so that $K$ is stationary with respect to time, so proposition~\ref{propo:ProperSurvival} is valid for each fixed covariate $X$, i.e. giving a fix covariate $X$, we have $S_X(t) = \Prob(T>t|X) \to 0$ as $t \to \infty$.
\section{Inference}\label{sec:Inference}
\subsection{Data augmentation scheme}

Notice that the likelihood of the model in equation~\eqref{eqn:modelCov} has to deal with terms of the form $\lambda_i(t)\exp^{-\int_0^t \lambda_i(s)ds}$ as these expressions come from the density of the first jump of a non-homogeneous Poisson process with intensity $\lambda_i$. In general the integral is not analytically tractable since $\lambda_i$ is defined by a Gaussian process. A numerical scheme can be used, but it is approximate and computationally expensive.
Following \cite{adams2009tractable}and~\cite{teh2011gaussian}, we develop a data augmentation scheme based on thinning a Poisson process that allows us to efficiently avoid a numerical method. 

If we want to sample a time $T$ with covariate $X$, as given in equation~\eqref{eqn:modelCov}, we can use the following generative process. Simulate a sequence of points $g_1,g_2, \ldots$ of points distributed according a Poisson process with intensity $\lambda_0(t)$. We assume the user is using a well-known parametric form and then, sampling the points $g_1,g_2, \ldots$ is tractable (in the Weibull case this can be easily done). Starting from $k =1$ we accept the point $g_k$ with probability $\sigma(l_{g_k,X})$. If it is accepted we set $T = g_k$, otherwise we try the point $g_{k+1}$ and repeat. We denote by $G$ the set of rejected point, i.e. if we accepted $g_k$, then $G = \{g_1,\ldots, g_{k-1}\}$. Note the above sampling procedure needs to evaluate the Gaussian process in the points $(g_k,X)$ instead the whole space. 

Following the above scheme to sample $T$, the following proposition can be shown.

\begin{proposition}\label{propo:Augmentation}
Let $\Lambda_0(t) = \int_0^T \lambda_0(t)dt$, then
$$p(G,T|\lambda_0,l(t)) =  \left(\lambda_0(T)\prod_{g \in G}\lambda_0(g)\right) e^{-\Lambda_0(T)}\left(\sigma(l(T))
 \prod_{g \in G}(1-\sigma(l_g))\right)$$
\end{proposition}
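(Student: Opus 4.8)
The plan is to factor the joint density $p(G,T\mid\lambda_0,l)$ into two conditionally independent pieces: a \emph{spatial} factor determined purely by the locations of the candidate points of the baseline Poisson process, and a \emph{thinning} factor coming from the accept/reject decisions, which are independent across points once the Gaussian process path $l$ is fixed. Multiplying these two pieces will give exactly the claimed expression.

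First I would make precise what the generative procedure pins down. Writing $G=\{g_1,\dots,g_{k-1}\}$ with $g_1<\cdots<g_{k-1}<T$, the construction forces $T=g_k$ to be the $k$-th arrival of the Poisson process with intensity $\lambda_0$, with the first $k-1$ arrivals located exactly at the $g_i$; everything the process does after $T$ is irrelevant, since the procedure halts at the first acceptance. Hence the spatial contribution is precisely the joint density of the first $k$ ordered arrival times of an inhomogeneous Poisson process of rate $\lambda_0$, evaluated at $(g_1,\dots,g_{k-1},T)$.

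Next I would invoke (or derive in a line) the standard arrival-time density. Using the interarrival representation, the density of the $i$-th arrival given the previous one is $\lambda_0(t_i)\exp\!\left(-\int_{t_{i-1}}^{t_i}\lambda_0(s)\,ds\right)$; taking the product over $i=1,\dots,k$ telescopes the exponentials into the single factor $\exp\!\left(-\int_0^T\lambda_0(s)\,ds\right)=e^{-\Lambda_0(T)}$ and leaves the product of rates $\lambda_0(T)\prod_{g\in G}\lambda_0(g)$. This yields the first parenthesised factor of the statement. Then the thinning factor is immediate: conditional on $l$ and on the candidate locations, each point is accepted or rejected by an independent Bernoulli draw, so rejecting every $g\in G$ contributes $\prod_{g\in G}(1-\sigma(l_g))$ and accepting $T$ contributes $\sigma(l(T))$.

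The algebra here is routine, so the step deserving genuine care is the independence structure rather than the computation. I would argue cleanly that, given the path $l$, the arrival locations and the accept/reject marks factorise, and that marginalising over the (arbitrary) behaviour of the process beyond $T$ contributes a factor of $1$ and leaves the density on $(G,T)$ unchanged. Once this conditional factorisation is justified, combining the spatial factor $\bigl(\lambda_0(T)\prod_{g\in G}\lambda_0(g)\bigr)e^{-\Lambda_0(T)}$ with the thinning factor $\sigma(l(T))\prod_{g\in G}(1-\sigma(l_g))$ reproduces the proposition exactly.
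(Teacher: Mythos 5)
Your proposal is correct and follows essentially the same route as the paper's own sketch: both factor the joint density into the baseline Poisson process's density on the candidate locations $G\cup\{T\}$ (yielding $\lambda_0(T)\prod_{g\in G}\lambda_0(g)\,e^{-\Lambda_0(T)}$) times the independent Bernoulli thinning probabilities $\sigma(l(T))\prod_{g\in G}(1-\sigma(l_g))$. Your derivation of the location density via telescoping interarrival densities is just a more explicit version of the paper's ``points exactly in $G\cup\{T\}$, no points elsewhere in $[0,T]$'' argument, and your remark that behaviour after $T$ marginalises to $1$ is a small but welcome extra precision.
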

\begin{proof}[proof sketch]
Consider a Poisson process on $[0,\infty)$ with intensity $\lambda_0(t)$. Then, the first term is the density of putting points exactly in $G \cup \{T\}$. The second term is the probability of putting no points in $[0,T] \setminus (G \cup \{T\})$, i.e. $e^{-\Lambda_0(T)}$. The second term is independent of the first one. The last term comes from the  acceptance/rejection part of the process. The points $g \in G$ are rejected with probability $1-\sigma(g)$, while the point $T$ is accepted with probability $\sigma(T)$. Since the acceptance/rejection of points is independent of the Poisson process we get the result.
\end{proof}
Using the above proposition, the model of equation~\eqref{eqn:model1} can be reformulated as the following tractable generative model:
\begin{align}\label{eqn:model2}
l(\cdot) \sim \GP(0, K), && (G,T)|\lambda_0(t), l(t) \sim  e^{-\Lambda_0(T)}(\sigma(l(T))
\lambda_0(T)) \prod_{g \in G}(1-\sigma(l_g))\lambda_0(g).
\end{align}
Our model states a joint distribution for the pair $(G,T)$ where $G$ is the set of rejected jump point of the thinned Poisson process and $T$ is the first accepted one. 

To perform inference we need data $(G_i,T_i,X_i)$, whereas we only receive points $(T_i,X_i)$. Thus, we need to sample the missing data $G_i$ given $(T_i,X_i)$. The next proposition gives us a way to do this.

\begin{proposition}\label{prop:GdadoT}\cite{teh2011gaussian}
Let $T$ be a data point with covariate $X$ and let $G$ be its set of rejected points. Then the distribution of $G$ given $(T, X,\lambda_0(t), l(t))$ is distributed as a non-homogeneous Poisson process with intensity  $\lambda_0(t)(1-\sigma(l(t,X)))$ on the interval $[0,T]$.
\end{proposition}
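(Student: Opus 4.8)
The plan is to obtain the conditional law of $G$ by a direct application of Bayes' rule, writing $p(G\,|\,T,X,\lambda_0,l) = p(G,T\,|\,\lambda_0,l)/p(T\,|\,\lambda_0,l)$, where the numerator is supplied verbatim by Proposition~\ref{propo:Augmentation}. The only genuine work is therefore to compute the marginal density $p(T\,|\,\lambda_0,l)$ by integrating the joint density over all admissible rejected-point configurations $G$, and then to verify that the resulting quotient is exactly the density of a Poisson process with the claimed intensity.

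First I would fix $X$, abbreviate $\sigma(l_t)$ for $\sigma(l(t,X))$, and marginalise. Since $G$ ranges over finite subsets of $[0,T]$, summing the joint density against the standard reference measure on point configurations amounts to summing over the number $n$ of rejected points with the usual $1/n!$ factor and integrating each point over $[0,T]$. Pulling out the factors that do not depend on $G$, namely $\lambda_0(T)\sigma(l_T)e^{-\Lambda_0(T)}$, leaves the configuration sum $\sum_G \prod_{g\in G}\lambda_0(g)(1-\sigma(l_g))$.

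Then comes the central step: recognise this sum as the exponential generating identity
$$\sum_{n=0}^{\infty}\frac{1}{n!}\left(\int_0^T \lambda_0(s)(1-\sigma(l_s))\,ds\right)^{n} = \exp\left(\int_0^T \lambda_0(s)(1-\sigma(l_s))\,ds\right).$$
This gives $p(T\,|\,\lambda_0,l) = \lambda_0(T)\sigma(l_T)\exp\left(-\int_0^T \lambda_0(s)\sigma(l_s)\,ds\right)$ once the exponents collapse, which is reassuringly the density of the first jump of a Poisson process of intensity $\lambda_0(t)\sigma(l_t) = \lambda(t)$. Dividing the joint density by this marginal cancels the accepted-point factor $\lambda_0(T)\sigma(l_T)$ and combines the two exponentials into $\exp\left(-\int_0^T \lambda_0(s)(1-\sigma(l_s))\,ds\right)$, leaving $p(G\,|\,T,\ldots) = \exp\left(-\int_0^T \lambda_0(s)(1-\sigma(l_s))\,ds\right)\prod_{g\in G}\lambda_0(g)(1-\sigma(l_g))$, which is precisely the law of a non-homogeneous Poisson process on $[0,T]$ with intensity $\lambda_0(t)(1-\sigma(l(t,X)))$.

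I expect the main obstacle to be bookkeeping rather than depth: getting the reference measure on the space of finite configurations right so that the $1/n!$ factors line up with the exponential series, and checking that the exponents genuinely telescope. A conceptually cleaner alternative, which I would mention as a sanity check, is to invoke the colouring (independent thinning) theorem directly: marking each point of the $\lambda_0$-Poisson process independently as accepted with probability $\sigma(l_t)$ splits it into two \emph{independent} Poisson processes, the accepted one with intensity $\lambda_0(t)\sigma(l_t)$ and the rejected one with intensity $\lambda_0(t)(1-\sigma(l_t))$. Since $T$ is a functional of the accepted process alone, namely its first point, and the two processes are independent, conditioning on $T$ leaves the rejected process on $[0,T]$ unchanged, giving the claim immediately; the only care needed there is to note that the random endpoint $T$ is measurable with respect to the accepted process.
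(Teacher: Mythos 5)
Your proposal is correct, and in fact it supplies an argument the paper never writes down: Proposition~\ref{prop:GdadoT} is stated with a citation to \cite{teh2011gaussian} and no proof, so there is no in-paper derivation to compare against line by line. Both of your routes are sound. The Bayes-rule route is a legitimate, self-contained derivation from Proposition~\ref{propo:Augmentation}: with the standard reference measure on finite configurations (Lebesgue on $[0,T]^n$ weighted by $1/n!$), and writing $l_s$ for $l(s,X)$, the configuration sum is indeed
\begin{equation*}
\sum_{n\ge 0}\frac{1}{n!}\Bigl(\int_0^T\lambda_0(s)(1-\sigma(l_s))\,ds\Bigr)^{n}
=\exp\Bigl(\int_0^T\lambda_0(s)(1-\sigma(l_s))\,ds\Bigr),
\end{equation*}
the exponents telescope against $e^{-\Lambda_0(T)}$ to give $p(T\mid\lambda_0,l)=\lambda_0(T)\sigma(l_T)\,e^{-\int_0^T\lambda_0(s)\sigma(l_s)\,ds}$ --- a genuine internal consistency check, since this recovers exactly the density of $T$ postulated in the model of equation~\eqref{eqn:modelCov} --- and the resulting quotient $e^{-\int_0^T\lambda_0(s)(1-\sigma(l_s))\,ds}\prod_{g\in G}\lambda_0(g)(1-\sigma(l_g))$ is precisely the density of the claimed inhomogeneous Poisson process on $[0,T]$. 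Your alternative colouring/thinning argument is essentially the reasoning the paper itself deploys in its proof sketch of Proposition~\ref{propo:Augmentation} (independence of the accept/reject marks from the underlying baseline process), and is the standard justification in the cited literature: position-dependent independent marking splits the $\lambda_0$-process into independent accepted and rejected Poisson processes with intensities $\lambda_0(t)\sigma(l_t)$ and $\lambda_0(t)(1-\sigma(l_t))$, and since $T$ is measurable with respect to the accepted process alone, conditioning on $T$ leaves the rejected process restricted to $[0,T]$ untouched; you correctly identify that this measurability point is the one step needing care. In comparing the two: your computational route buys an explicit normalisation check tied to Proposition~\ref{propo:Augmentation}, while the colouring route is shorter, conceptually cleaner, and the one most consistent with the paper's own style of argument.
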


\subsection{Inference algorithm}

The above data augmentation scheme suggests the following inference algorithm. For each data point $(T_i,X_i)$ sample $G_i|(T_i,X_i, \lambda_0, l)$, then sample $l|((G_i, T_i, X_i)_{i=1}^n, \lambda_0)$, where $n$ is the number of data points. Observe that the sampling of $l$ given $(G_i, T_i, X_i)_{i=1}^n, \lambda_0)$ can be seen as a Gaussian process binary classification problem, where the points $G_i$ and $T_i$ represent two different classes. A variety of MCMC techniques can be used to sample $l$, see \cite{murray2010slice} for details.

For our algorithm we use the following notation. We denote the dataset as $(T_i, X_i)_{i=1}^n$. The set $G_i$ refers to the set of rejected points of $T_i$. We denote $\boldsymbol G = \bigcup_{i=1}^n G_i$ and  $\boldsymbol T = \{T_1,\ldots, T_n\}$ for the whole set of rejected and accepted points, respectively. For a point $t \in G_i \cup \{T_i\}$ we denote $l(t)$ instead of $l(t,X_i)$, but remember that each point has an associated covariate. For a set of points $A$ we denote $l(A) = \{l(a): a \in A\}$. Also $\Lambda_0(t)$ refers to $\int_0^t \lambda_0(s)ds$ and $\Lambda_0(t)^{-1}$ denotes its inverse function (it exists since $\Lambda_0(t)$ is increasing).
Finally, $N$ denotes the number of iterations we are going to run our algorithm. The pseudo code of our algorithm is given in Algorithm~1.
\begin{figure}
\begin{algorithm}[H]
  \DontPrintSemicolon 
  \KwIn{Set of times $\boldsymbol{T}$ and the Gaussian proces $l$ instantiated in $\boldsymbol{T}$ and other initial parameters} 

  \For{q=1:N}{ 
   \For{i=1:n}
    {
 \label{line:sampleG}       $n_i\sim\Pois(1;\Lambda_0(T_i));$\\
     \label{line:SamplingN_0}  $\tilde C_i\sim\Unif(n_i;0,\Lambda_0(T_i)$);\\
 Set $A_i = \Lambda_0^{-1}(\tilde A_i)$;\\
    }
  \label{line:endSamplingN_0} Set $\boldsymbol A = \cup_{i=1}^n A_i$\\
Sample $l(\boldsymbol A)|l(\boldsymbol G\cup \boldsymbol T), \lambda_0$\\
 \label{line:sample lA}    \For{i=1:n}   
    {
       $U_i\sim \Unif(n_i;0,1)$\\  
set $G_{(i)}=\left\{a\in A_i \text{ such that } U_i<1-\sigma(l(a))\right\}$    
    
    }
      Set $\boldsymbol G = \cup_{i=1}^n G_i$\\
    \label{line:endSampleG} \textbf{Update parameters of $\lambda_0(t)$} \\
     \textbf{Update} $l(\boldsymbol G\cup \boldsymbol T)$ and hyperparameter of the kernel.\\
\label{line:sample lGT}}
\caption{Inference Algorithm.}
\end{algorithm}
\end{figure}

Lines~\ref{line:sampleG} to $\ref{line:endSampleG}$ sample the set of rejected points $G_i$ for each survival time $T_i$. Particularly lines~\ref{line:SamplingN_0} to~\ref{line:endSamplingN_0} used the Mapping theorem, which tells us how to map a homogeneous Poisson process into a non-homogeneous with the appropriate intensity. Observe it makes uses of the function $\Lambda_0$ and its inverse function, which shall be provided or be easily computable. The following lines classify the points drawn from the Poisson process with intensity $\lambda_0$ in the set $G_i$ as in proposition~\ref{prop:GdadoT}. Line~\ref{line:sample lA} is used to sample the Gaussian process in the set of points $\boldsymbol A$ given the values in the current set $\boldsymbol G \cup \boldsymbol T$. Observe initially $\boldsymbol G = \emptyset$.

\subsection{Adding censoring}

Usually, in Survival analysis, we encounter three types of censoring: right, left and interval censoring. We assume each data point $T_i$ is associated with an (observable) indicator $\delta_i$, denoting the type of censoring or if the time is not censored. We describe how the algorithm described before can easily handle any type of censorship.

\textbf{Right censorship:} In presence of right censoring, the likelihood for a survival time $T_i$ is $S(T_i)$. The related event in terms of the rejected points correspond to do not accept any location $[0,T_i)$. Hence, we can treat right censorship in the same way as the uncensored case, by just sampling from the distribution of the rejected jump times prior $T_i$. In this case, $T_i$ is not an accepted location, i.e. $T_i$ is not considered in the set $\boldsymbol{T}$ of line~\ref{line:sample lA} nor~\ref{line:sample lGT}.

\textbf{Left censorship:} In this set-up, we know the survival time is at most $T_i$, then the likelihood of such time is  $F(T_i)$. Treating this type of censorship is slightly more difficult than the previous case because the event is more complex. We ask for accepting at least  one jump time prior $T_i$, which might leads us to have a larger set of latent variables. In order to avoid this, we proceed by imputing the `true' survival time $T'_i$ by using its truncated distribution on $[0,T_i]$. Then we proceed using $T'_i$ (uncensored) instead of $T_i$. We can sample $T'_i$ as following: we sample the first point of a Poisson process with the current intensity $\lambda$, if such point is after $T_i$ we reject the point and repeat the process until we get one. The imputation step has to be repeated at the beginning of each iteration.

\textbf{Interval censorship:} If we know that survival time lies in the interval $I = [S_i,T_i]$ we can deal with interval censoring in the same way as left censoring but imputing the survival time $T'_i$ in $I$.

\section{Approximation scheme}

As shown is algorithm 1, in line~\ref{line:sample lA}  we need to sample the Gaussian process $(l(t))_{t \geq 0}$ in the set of points $\boldsymbol A$ from its conditional distribution, while in line~\ref{line:sample lGT}, we have to update $(l(t))_{t \geq 0}$ in the set $\boldsymbol G \cup \boldsymbol T$. Both lines require matrix inversion which scales badly for massive datasets or for data $\boldsymbol T$ that generates a large set $\boldsymbol G$. In order to help the inference we use a random feature approximation of the Kernel~\cite{rahimi2007random}. 

We exemplify the idea on the kernel we use in our experiment, which is given by $K((t,X),(s,Y)) = K_0(t,s)+\sum_{j=1}^d X_jY_j K_j(t,s),$
where each $K_j$ is a square exponential kernel, wuth overall variance $\sigma_j^2$ and length scale parameter $\phi_j$ Hence, for $m \geq 0$, the approximation of our Gaussian process is given by
\begin{talign}
\label{eqn:approxScheme}
g^m(t,X) = g^m_0(t)+\sum_{j=1}^d X_j g^m_j(t)
\end{talign}

where each $g^m_j(t) = \sum_{k=1}^m a^j_k \cos(s^j_kt)+b^j_k \sin(s^j_kt)$, and each $a^j_k$ and $b^j_k$ are independent samples of $\Normal(0,\sigma_j^2)$ where $\sigma_j^2$ is the overall variance of the kernel $K_j$. Moreover, $s^j_k$ are independent samples of $\Normal(0,1/(2\pi \phi_j))$ where $\phi_j$ is the length scale parameter of the kernel $K_j$.
Notice that $g(t,X)$ is a Gaussian process since each $g_j(t)$ is the sum of independent normally distributed random variables. It is know that as $m$ goes to infinity, the kernel of $g^m(t,X)$ approximates the kernel $K_j$. The above approximation can be done for any stationary kernel and we refer the reader to \cite{rahimi2007random} for details.

The inference algorithm for this scheme is practically the same, except for two small changes. The values $l_A$ in line~\ref{line:sample lA} are easier to evaluate because we just need to know the values of the $a^j_k$ and $b^j_k$, and no matrix inversion is needed. In line~\ref{line:sample lGT} we just need to update all values $a_j^k$ and $b_j^k$. Since they are independent variables there is no need for matrix inversion.
\section{Experiments}
All the experiments are performed using our approximation scheme of equation~\eqref{eqn:approxScheme} with a value of $m=50$. Recall that for each Gaussian process, we used a squared exponential kernel with overall variance $\sigma^2_j$ and length scale parameter $\phi_j$. Hence for a set of $d$ covariates we have a set of $2(d+1)$ hyper-parameters associated to the Gaussian processes. In particular, we follow a Bayesian approach and place a log-Normal prior for the length scale parameter $\phi_j$, and a gamma prior (inverse gamma is also useful since it is conjugate) for the variance $\sigma_j^2$. We use elliptical slice sampler \cite{murray2010elliptical} for jointly updating the set of coefficients $\{a_k^j, b_k^j\}$ and length-scale parameters.

With respect the baseline hazard we consider two models. For the first option, we choose the baseline hazard $2\beta t^{\alpha-1}$ of a Weibull random variable. Following a Bayesian approach, we choose a gamma prior on $\beta$ and a uniform $\Unif(0,2.3)$ on $\alpha$. Notice the posterior distribution for $\beta$ is conjugate and thus we can easily sample from it. For $\alpha$, use a Metropolis step to sample from its posterior. Additionally, observe that for the prior distribution of $\alpha$, we constrain the support to $(0,2.3)$. The reason for this is because the expected size of the set $\textbf{G}$ increases with respect to $\alpha$ and thus slow down computations.

As second alternative is to choose the baseline hazard as $\lambda_0(t)=2\Omega$, with gamma prior over the parameter $\Omega$. The posterior distribution of $\Omega$ is also gamma. We refer to both models as the Weibull model (W-SGP) and the Exponential model (E-SGP) respectively.

The implementation for both models is exactly the same as in Algorithm 1 and uses the same hyper-parameters described before. As the tuning of initial parameters can be hard, we  use the maximum likelihood estimator as initial parameters of the model.
\subsection{Synthetic Data} 
In this section we present experiments made with synthetic data. Here we perform the experiment proposed in \cite{de2009bayesian} for crossing data. We simulate $n = 25,50, 100$ and $150$ points from each of the following densities, $p_0(t) = \Normal(3,0.8^2)$ and $p_1(t) = 0.4\Normal(4,1)+0.6\Normal(2,0.8^2)$, restricted to $\R^+$.
The data contain the sample points and a covariate indicating if such points were sampled from the p.d.f $p_0$ or $p_1$. Additionally, to each data point, we add  3 noisy covariates taking random values in the interval $[0,1]$.  We report the estimations of the survival functions for the Exponential and Weibull model in figures~\ref{fig:crossingExponentialH} and~\ref{fig:WeiCross}, respectively.

\setlength{\tabcolsep}{0pt}
\begin{figure}
  \begin{tabular}{cccc}
    \includegraphics[width=.25\textwidth]{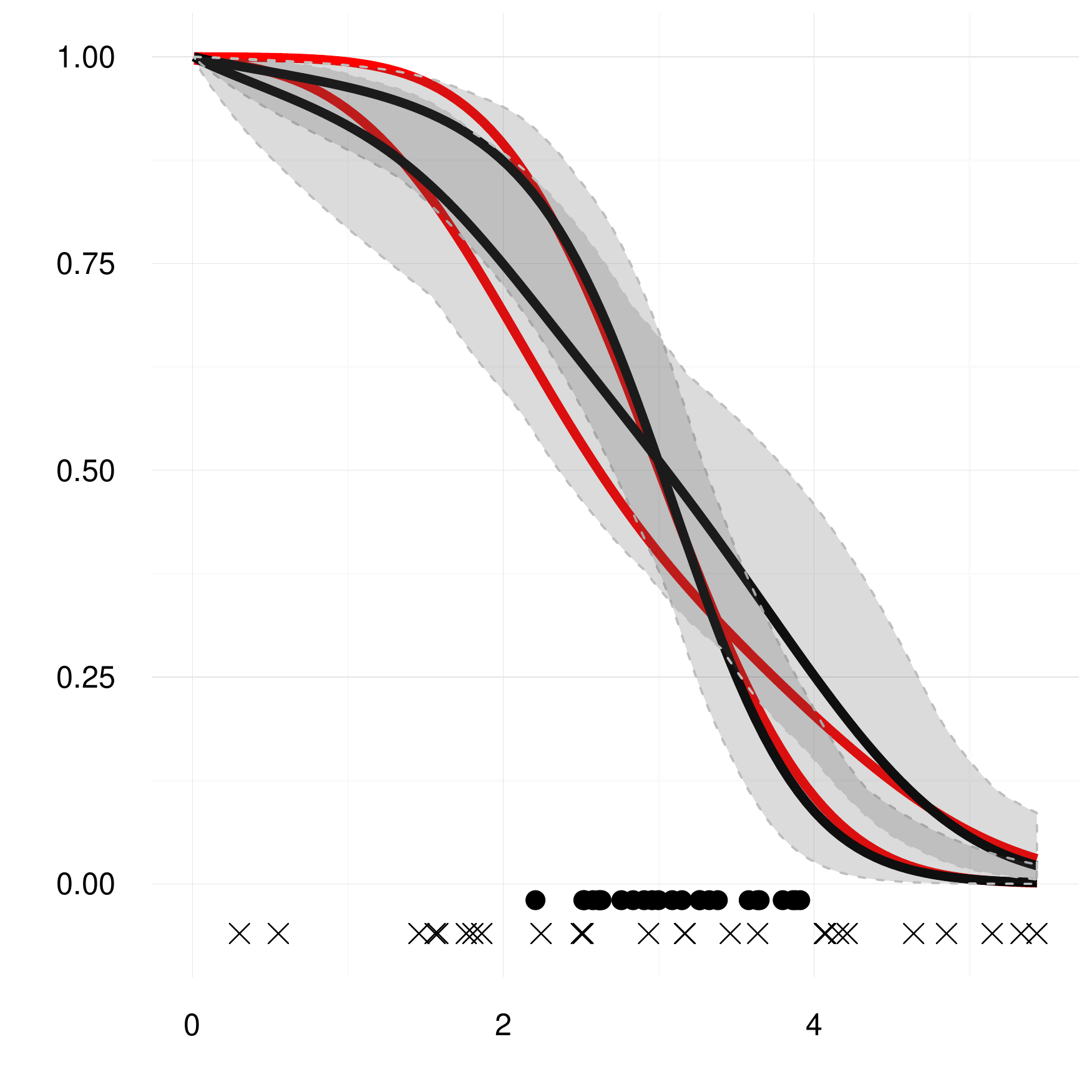} &
    \includegraphics[width=.25\textwidth]{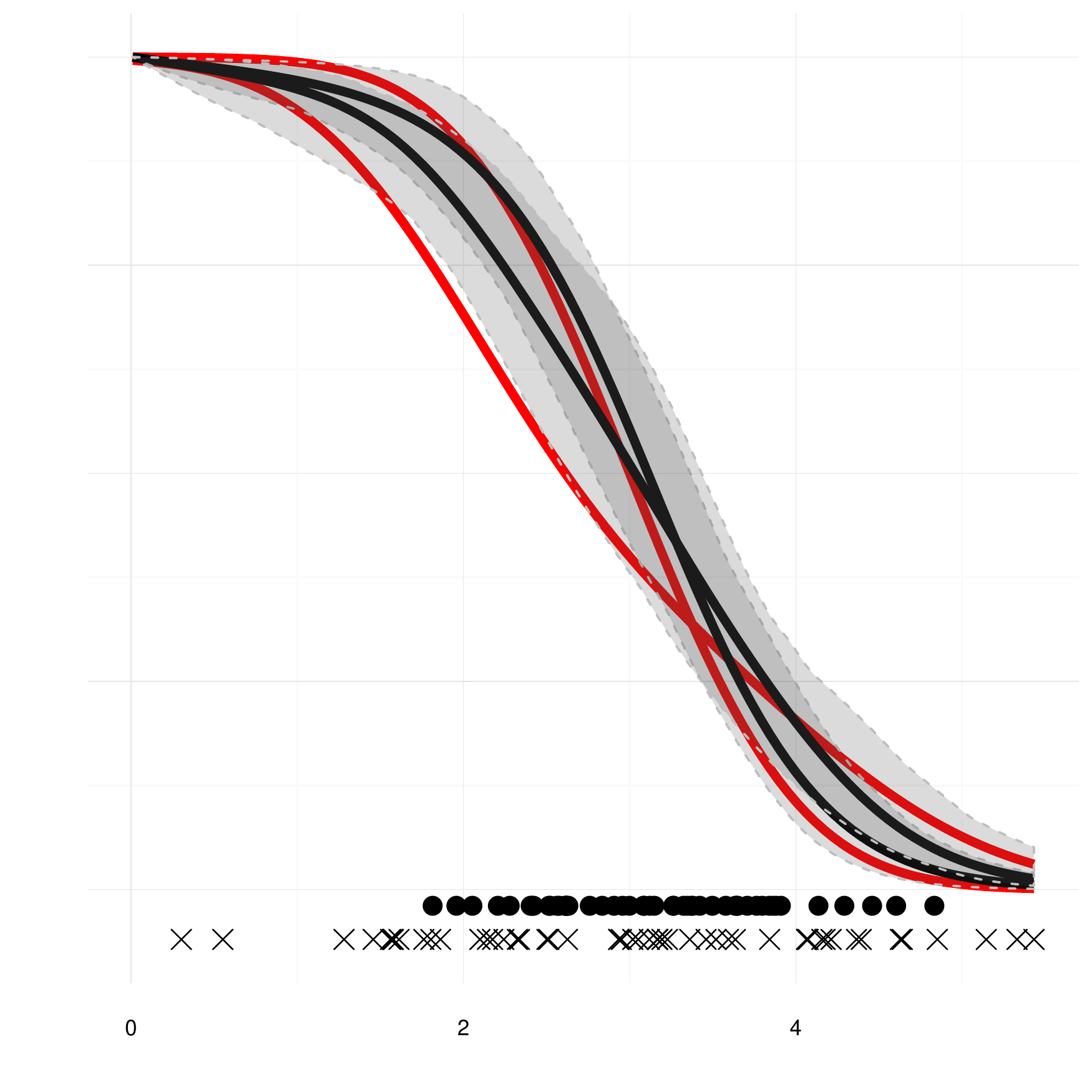} &
    \includegraphics[width=.25\textwidth]{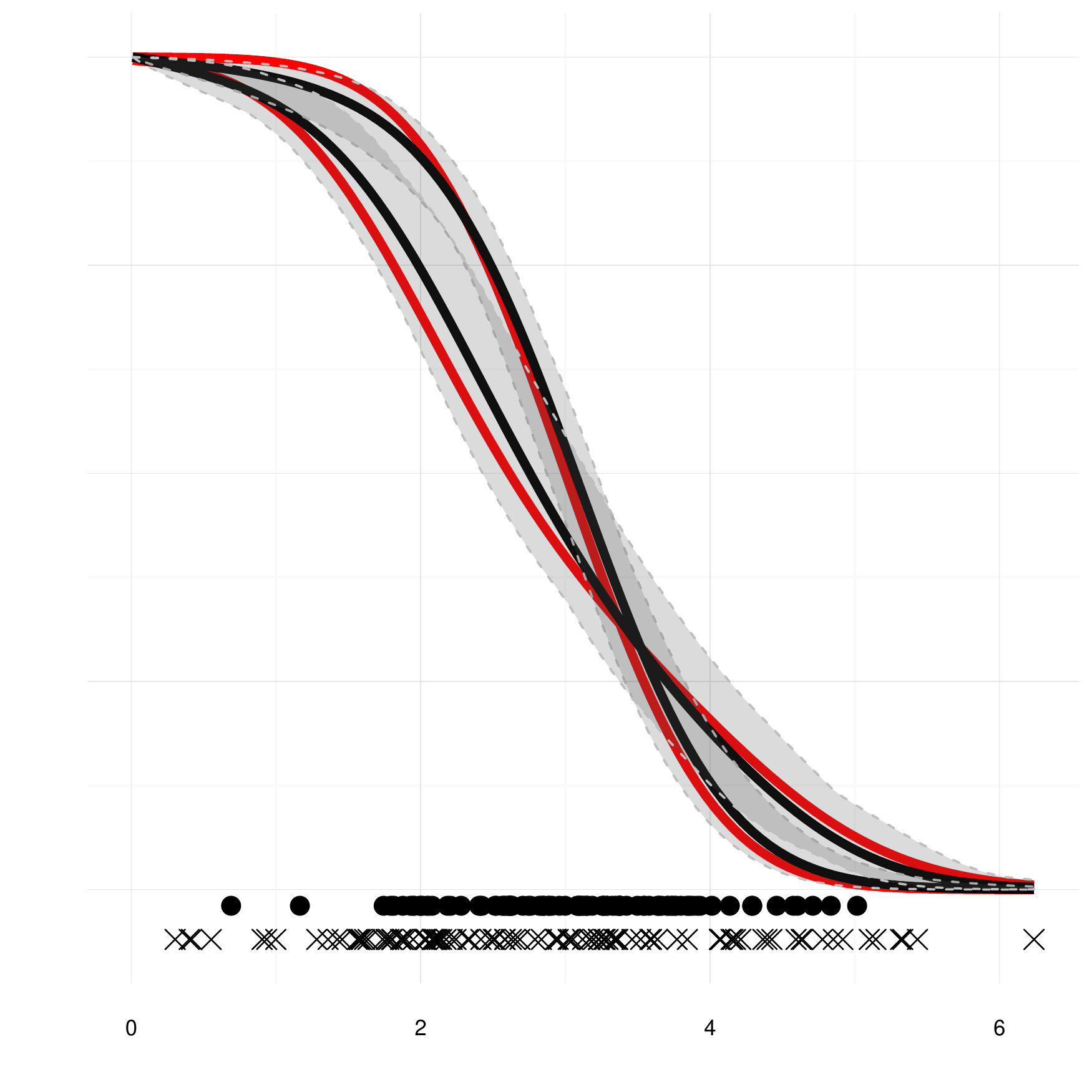} &
    \includegraphics[width=.25\textwidth]{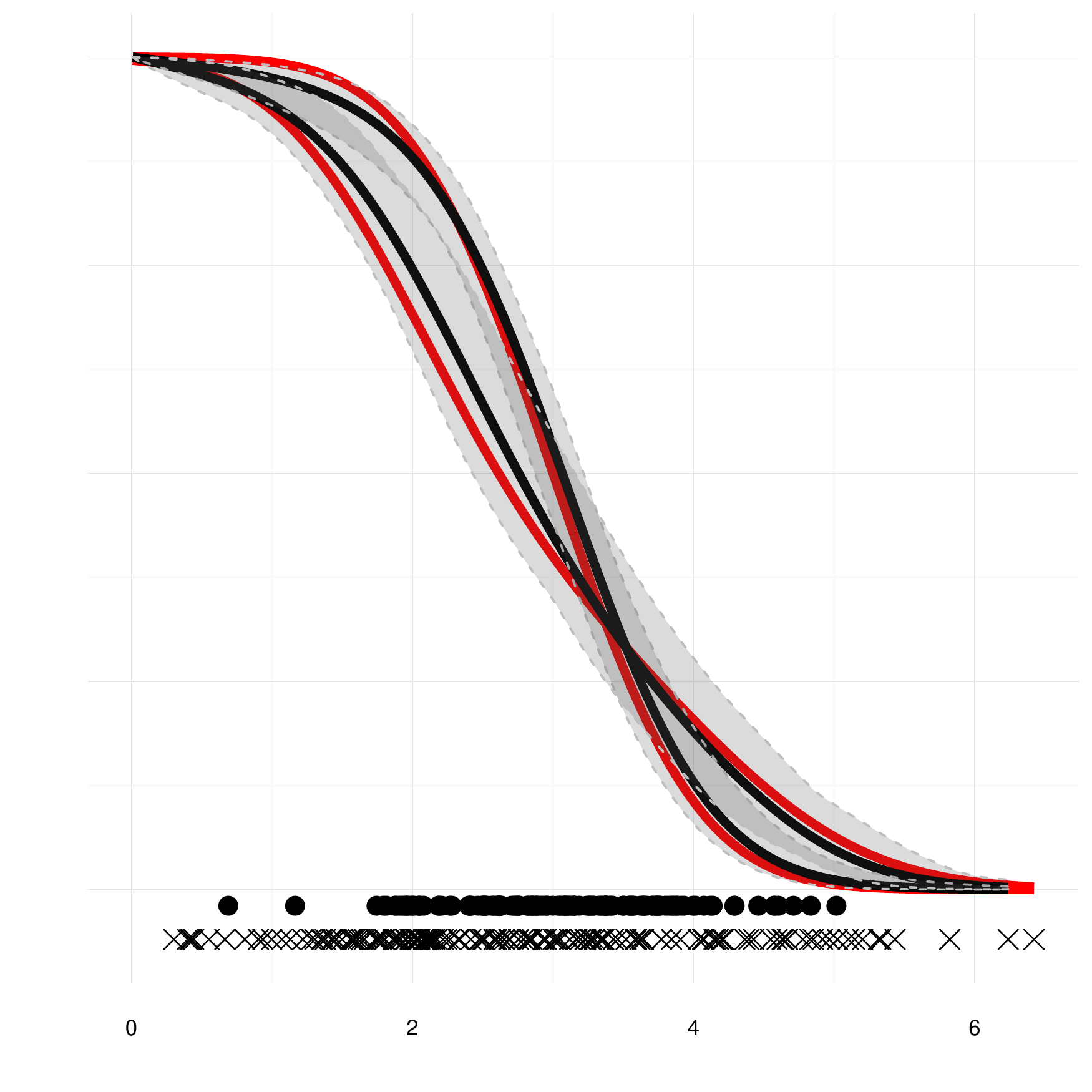} \\
    \includegraphics[width=.25\textwidth]{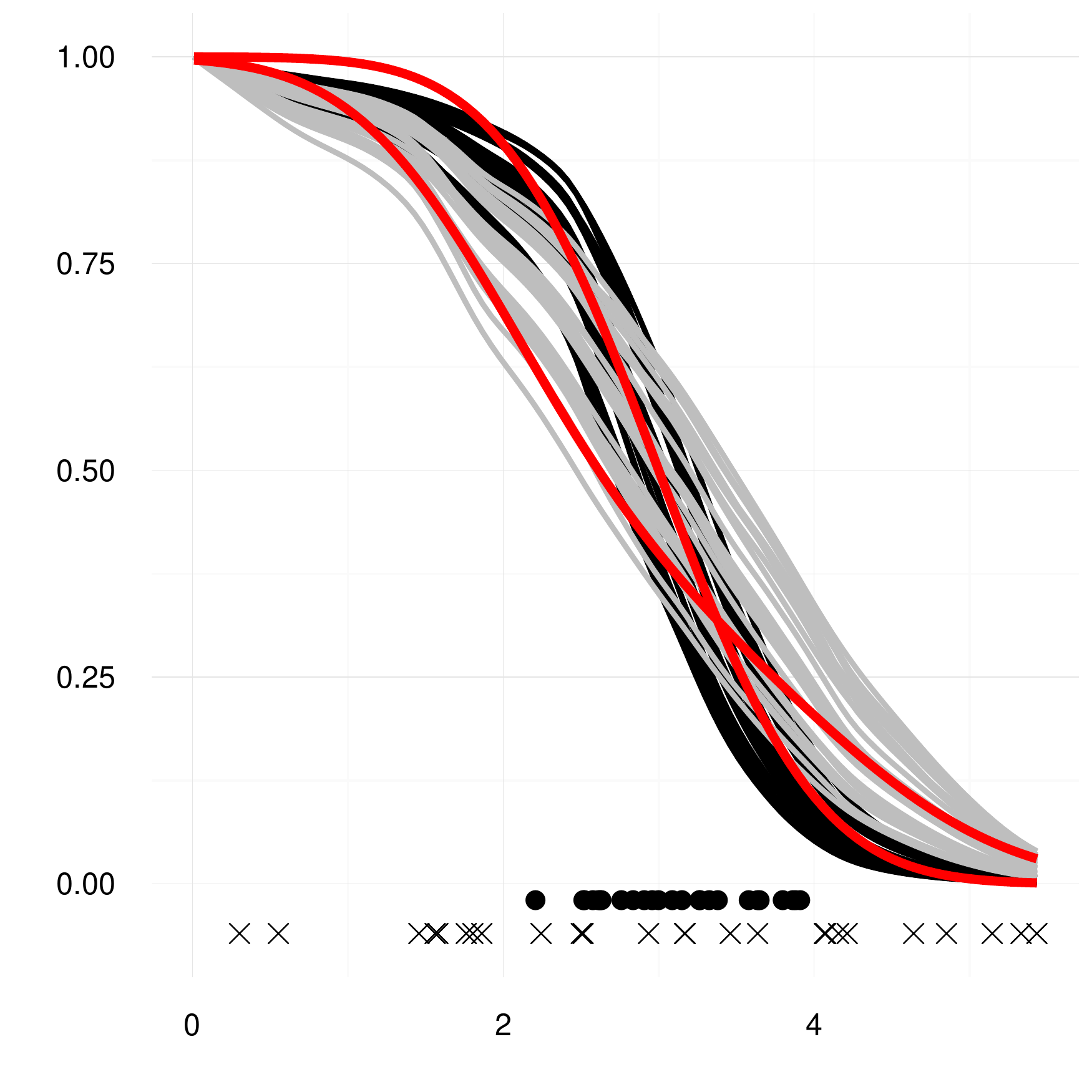} &
    \includegraphics[width=.25\textwidth]{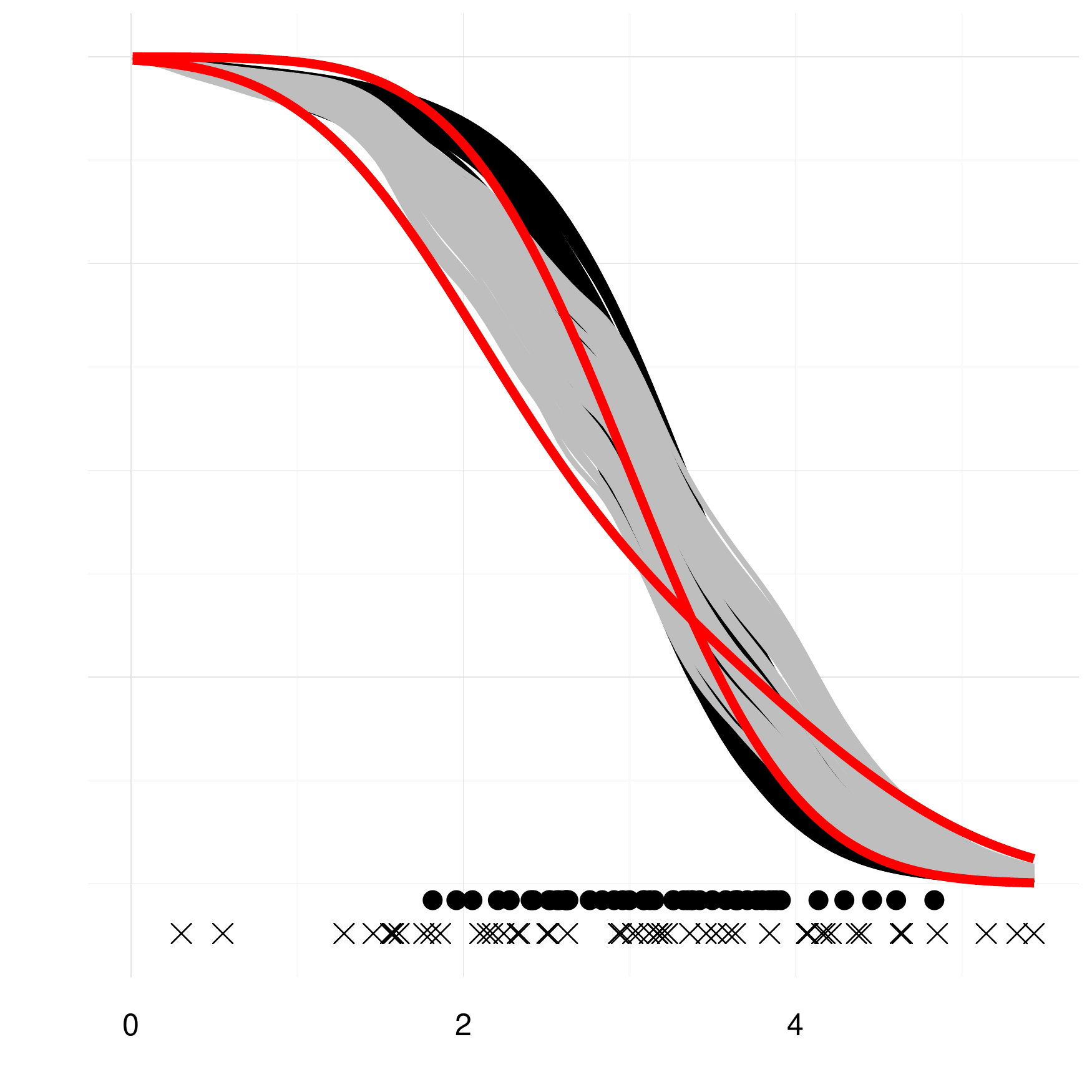} &
    \includegraphics[width=.25\textwidth]{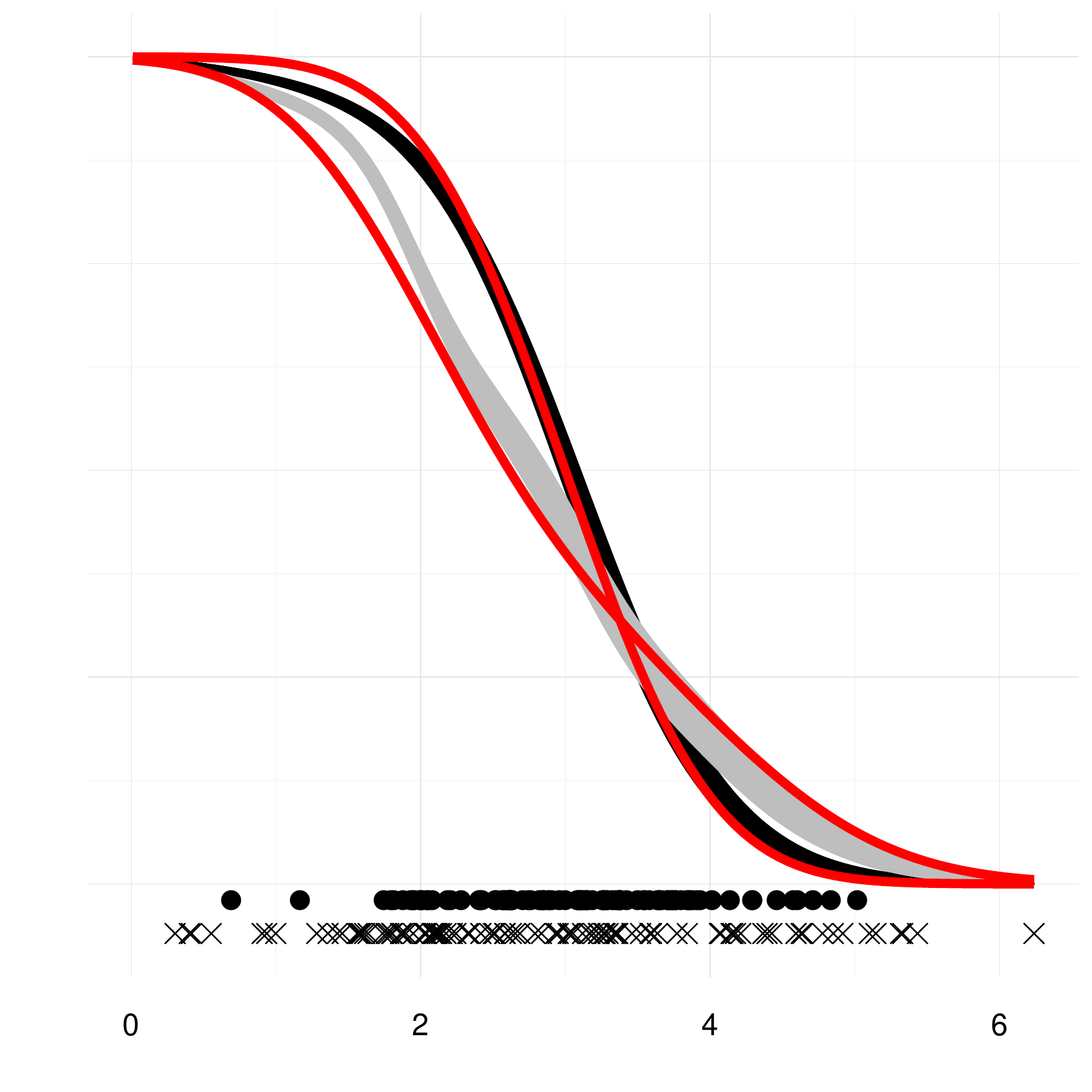} &
    \includegraphics[width=.25\textwidth]{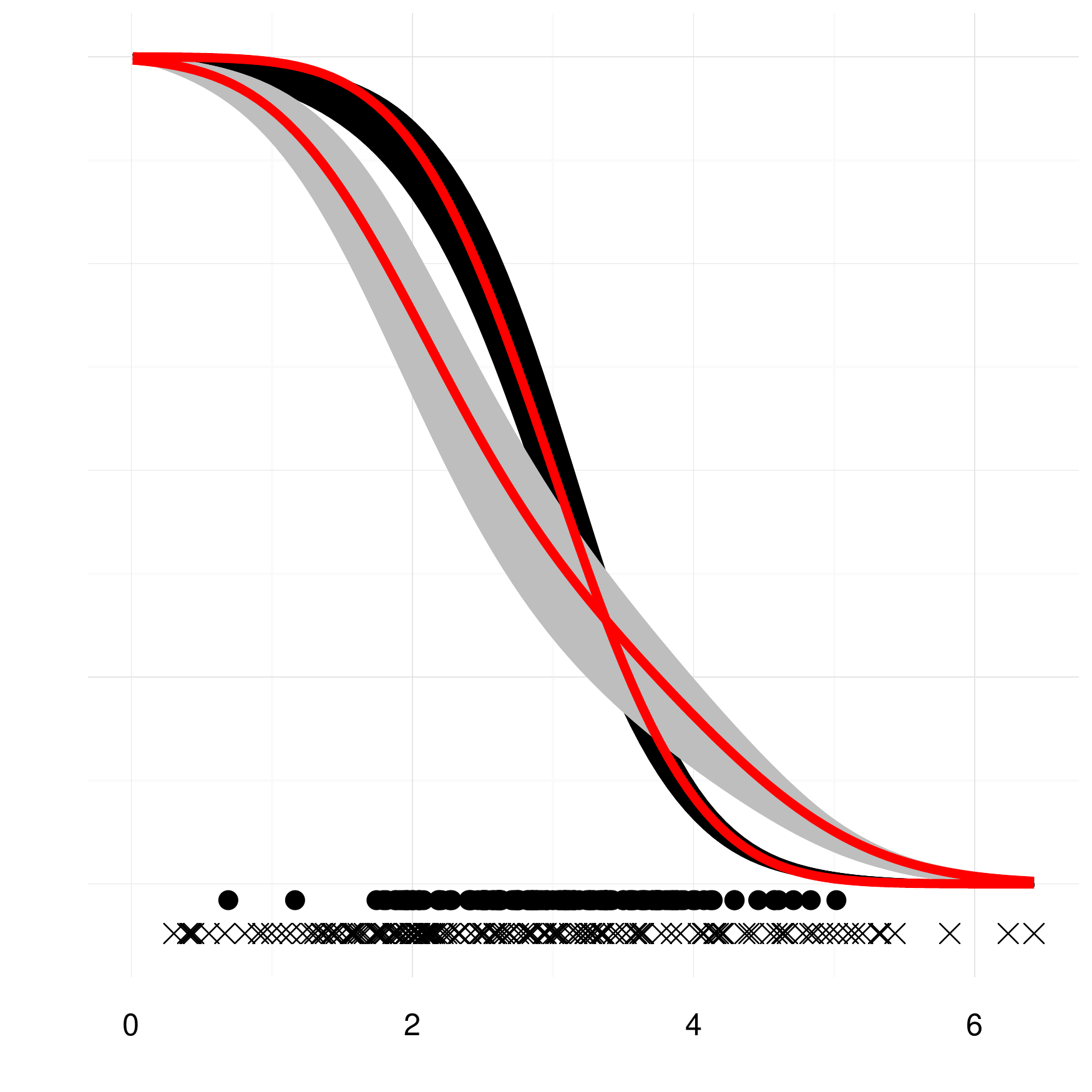} 
  \end{tabular}
  \caption{Exponential Model. First row: clean data, Second row: data with noisy covariates. Per columns we have 25,50,100 and 150 data points per each group (shown in $X$-axis) and data is increasing from left to right. Dots indicate data is generated from density $p_0$, crosses, from $p_1$. In the first row a confidence interval for each curve is given. In the second row each curve for each combination of noisy covariate is shown.
  }\label{fig:crossingExponentialH}
\end{figure}  

\setlength{\tabcolsep}{0pt}
\begin{figure}
  \begin{tabular}{@{}cccc@{}}
    \includegraphics[width=.25\textwidth]{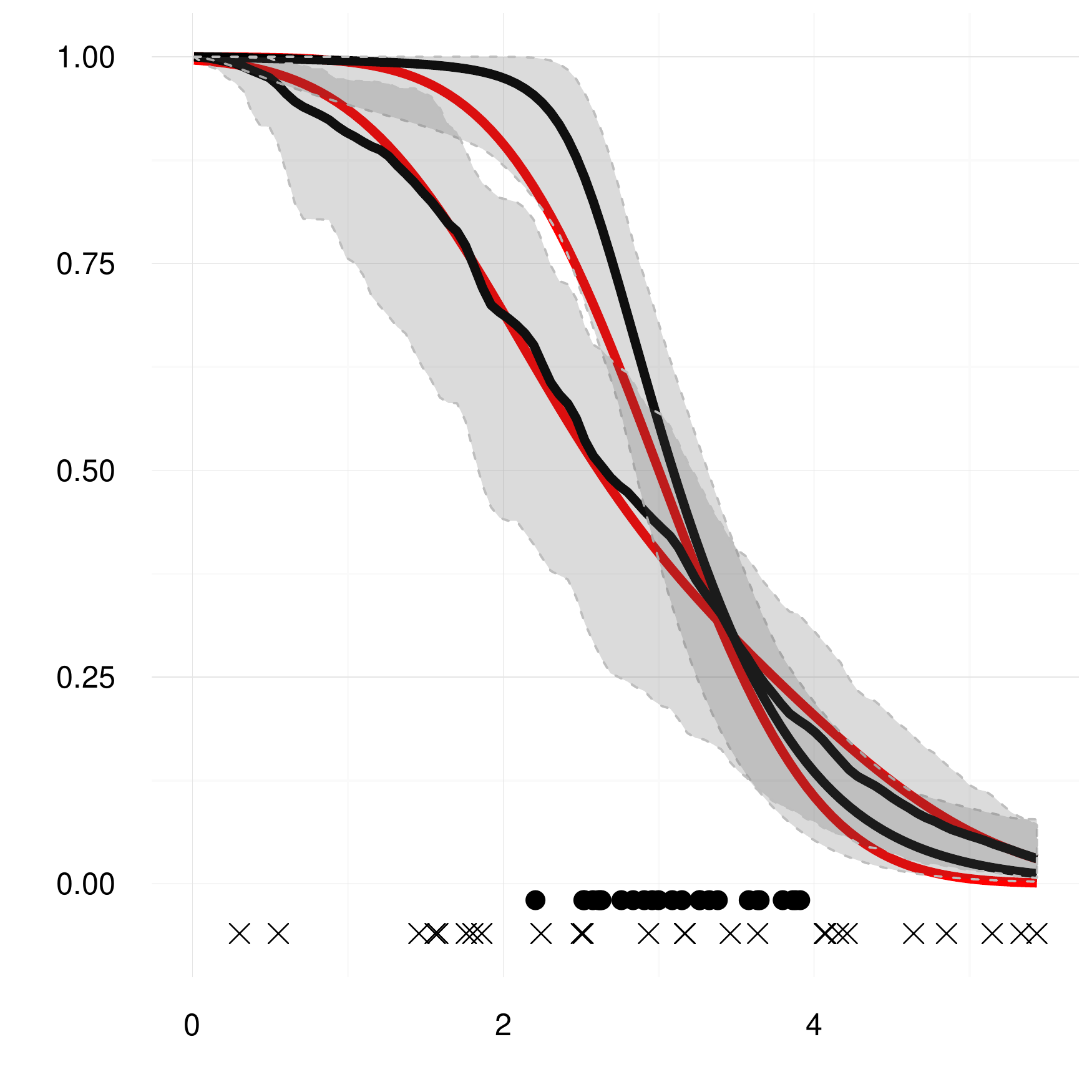} &
    \includegraphics[width=.25\textwidth]{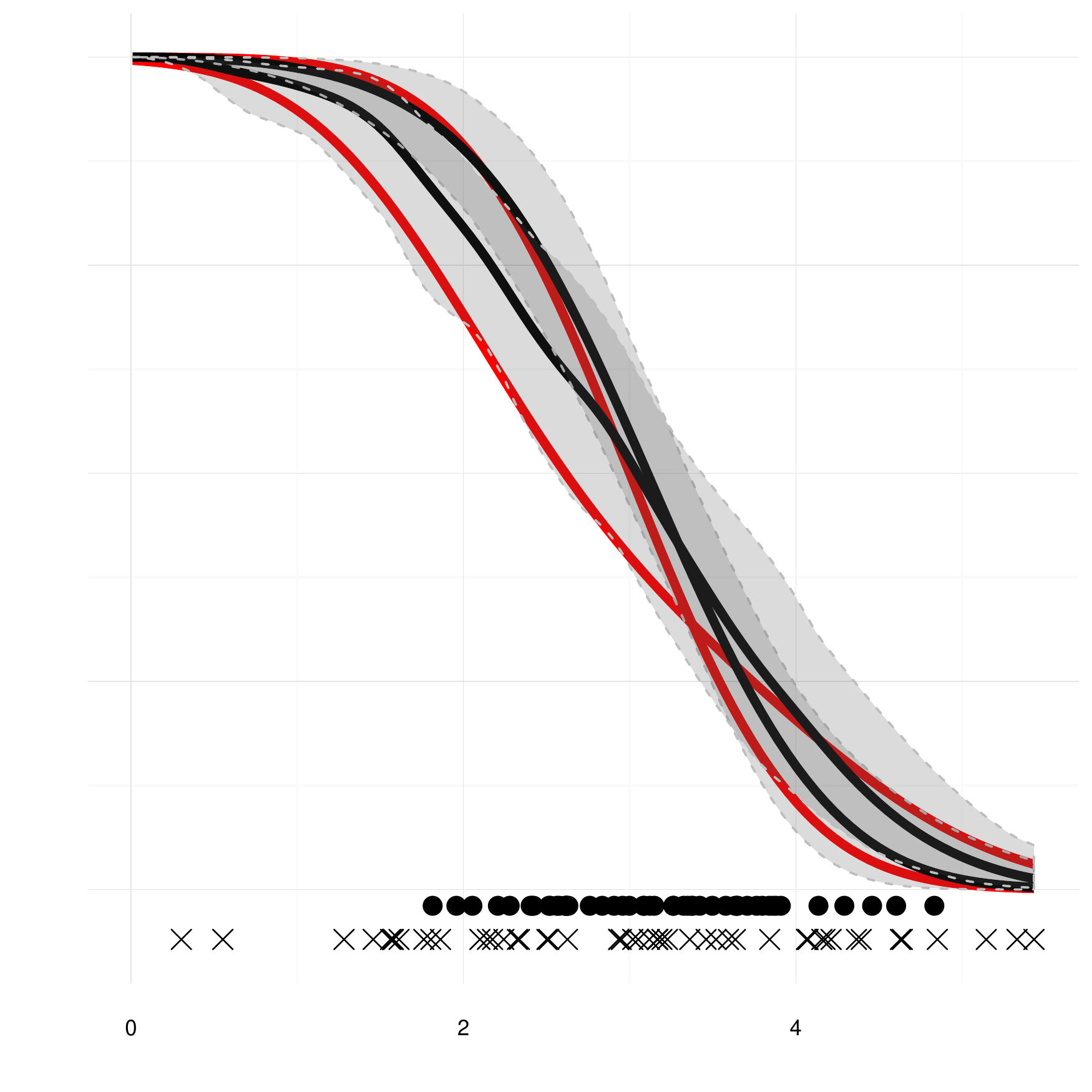} &
    \includegraphics[width=.25\textwidth]{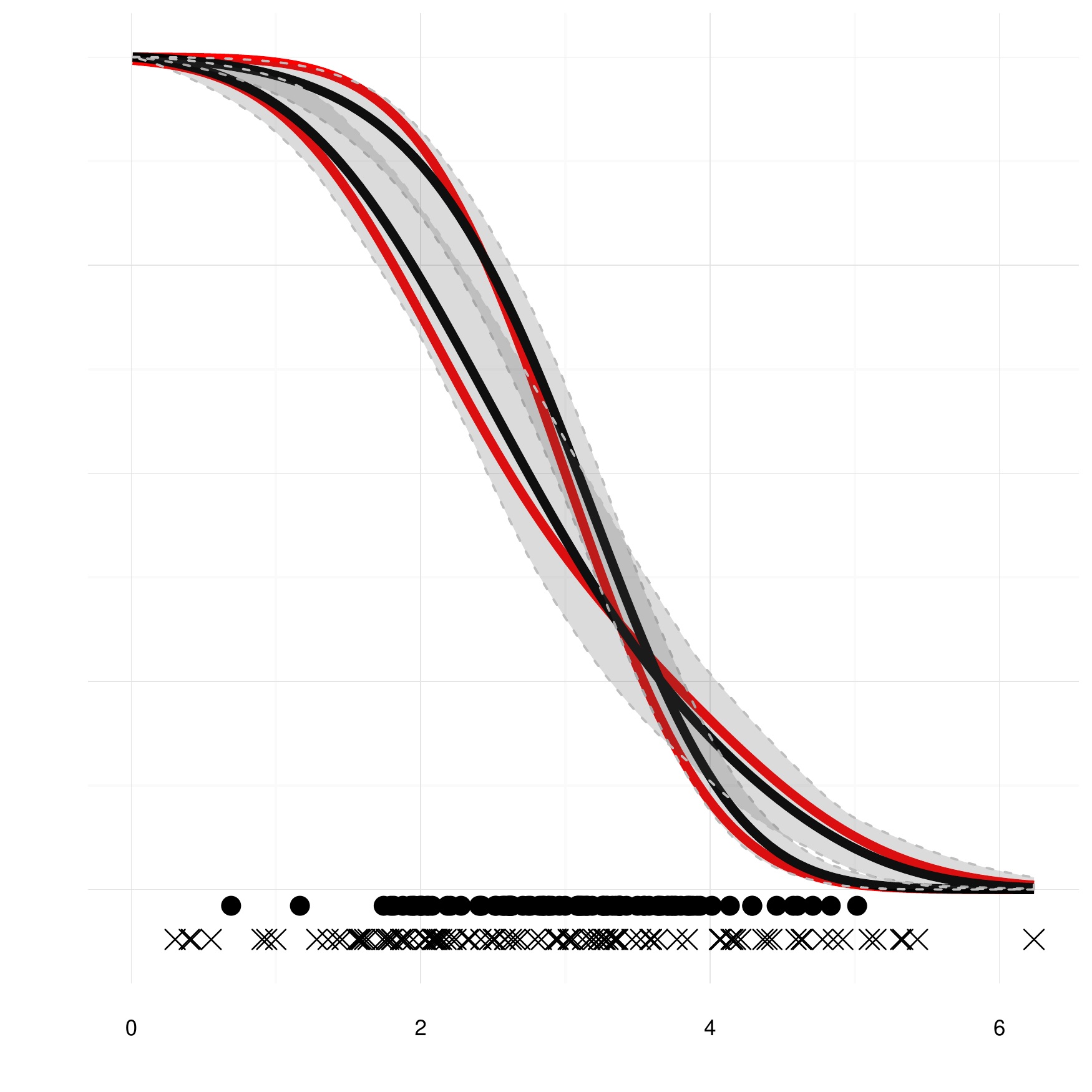} &
    \includegraphics[width=.25\textwidth]{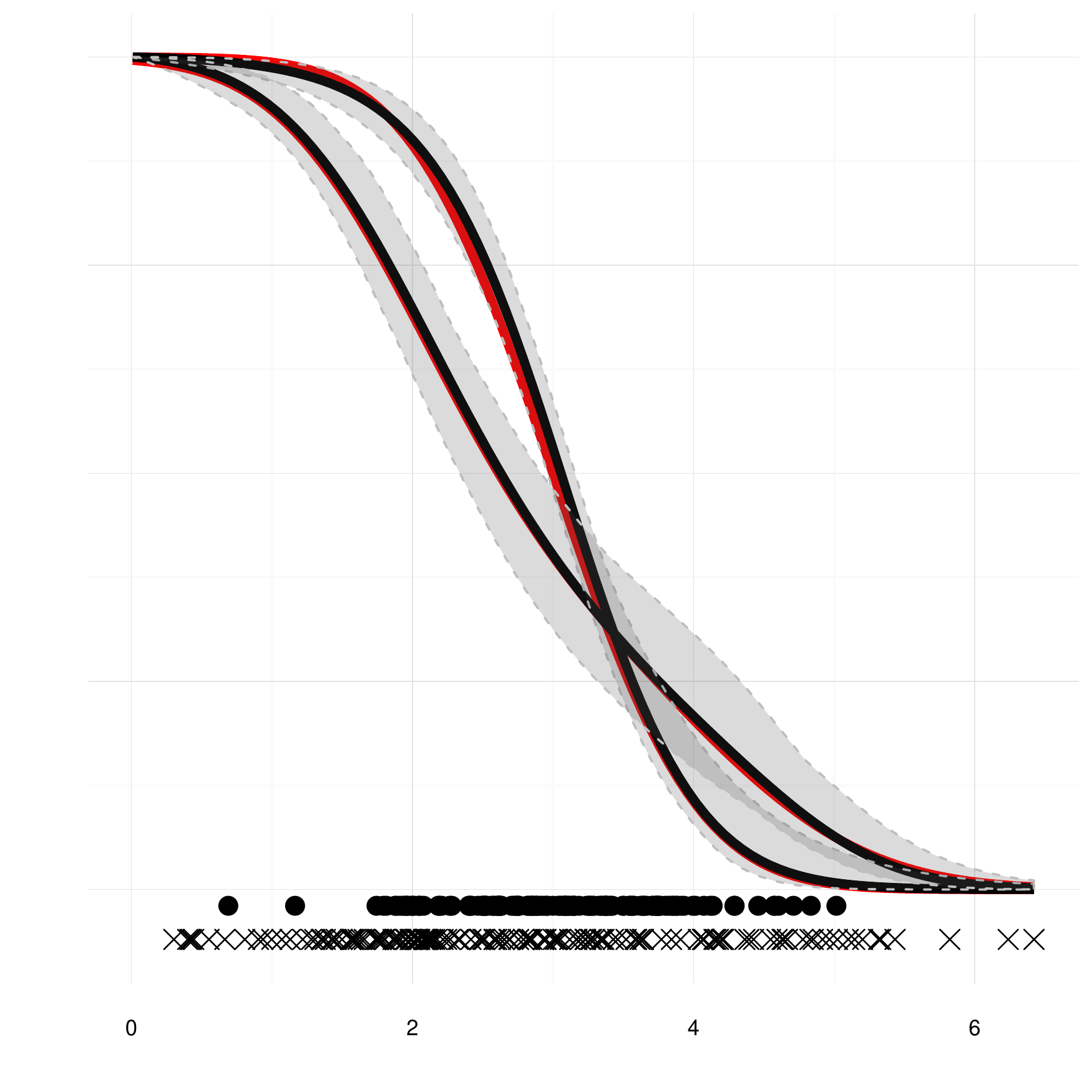} \\
    \includegraphics[width=.25\textwidth]{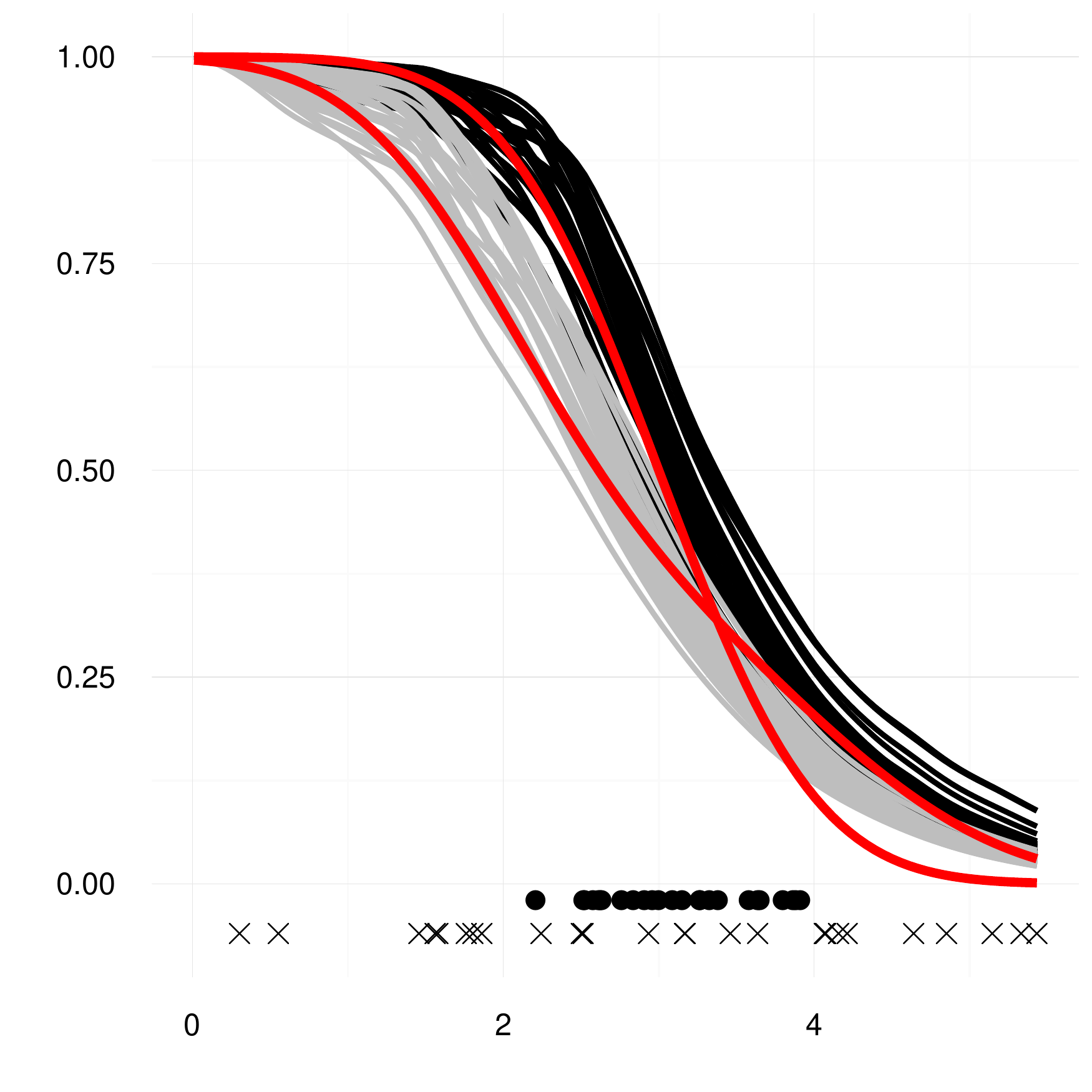} &
    \includegraphics[width=.25\textwidth]{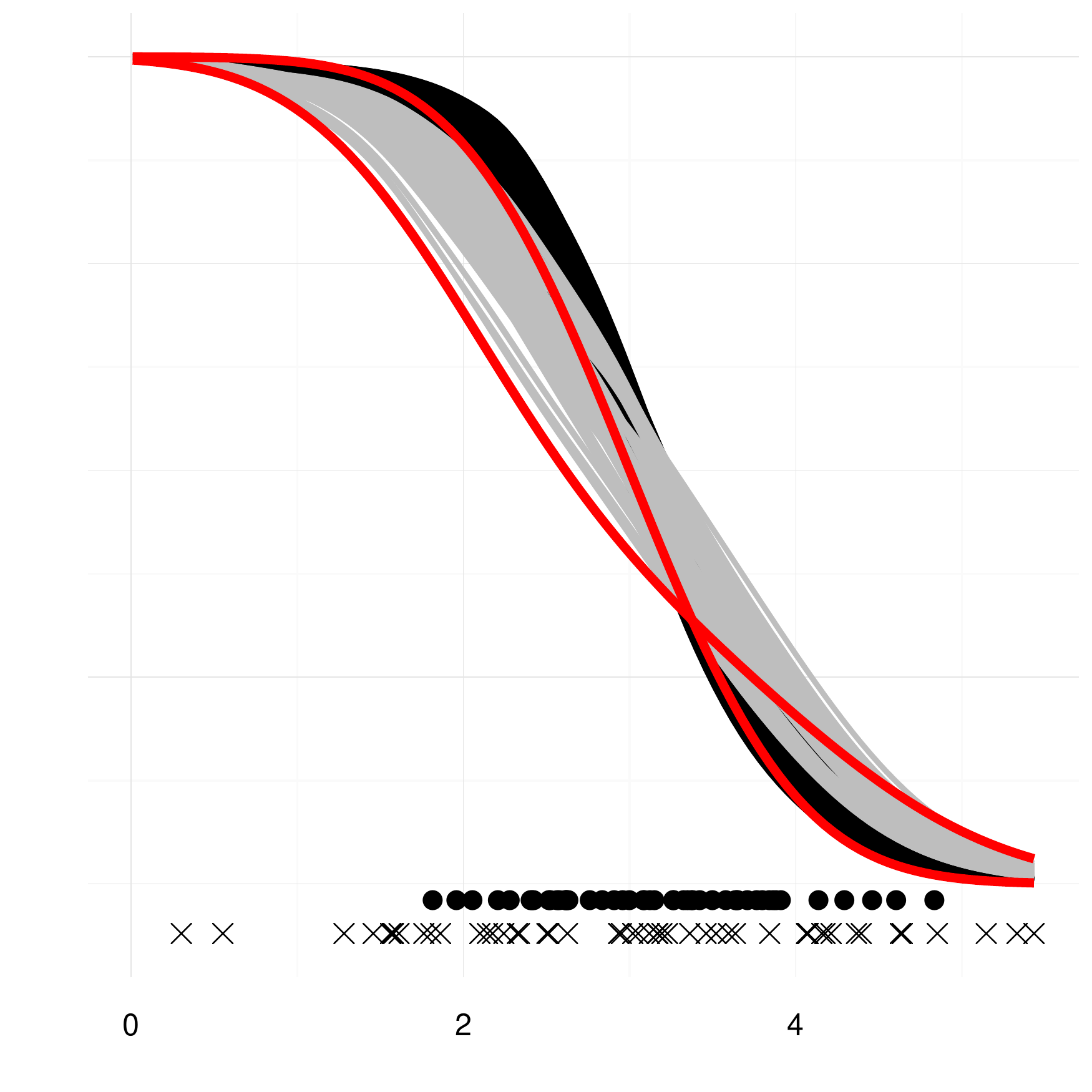} &
    \includegraphics[width=.25\textwidth]{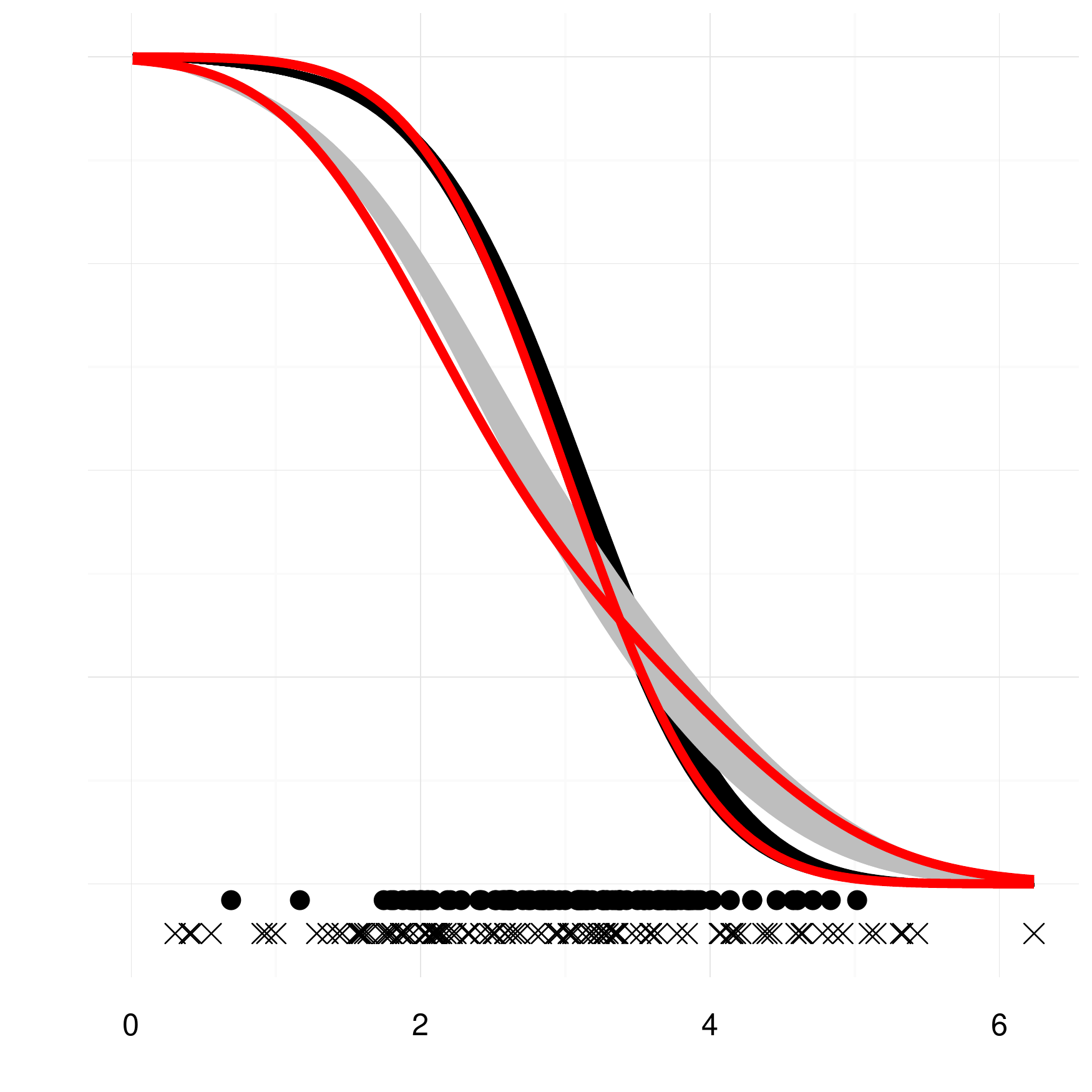} &
    \includegraphics[width=.25\textwidth]{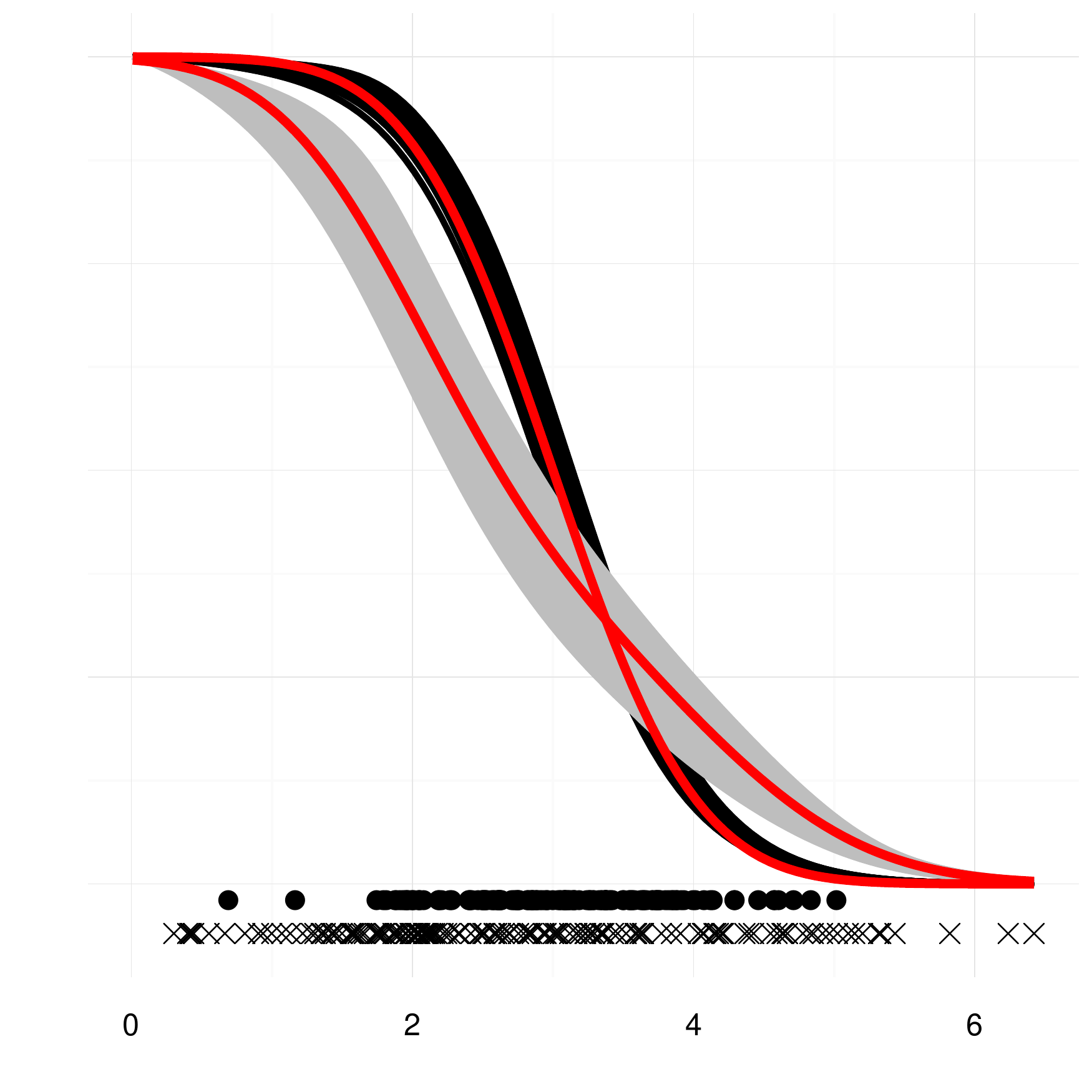} 
  \end{tabular}
   \caption{Weibull Model. First row: clean data, Second row: data with noise covariates. Per columns we have 25, 50, 100 and 150 data points  per each group (shown in $X$-axis) and data is increasing from left to right. Dots indicate data is generated from $p_0$, crosses, from $p_1$. In the first row a credibility interval is shown. In the second row each curve for each combination of noisy covariate is given.  }
  \label{fig:WeiCross}
\end{figure}

It is clear that for the clean data (without extra noisy covariates), the more data the better the estimation. In particular, the model perfectly detects the cross in the survival functions. For the noisy data we can see that with few data points the noise seems to have an effect in the precision of our estimation in both models. Nevertheless, the more points the more precise is our estimate for the survival curves. With 150 points, each group seems to be centred on the corresponding real survival function, independent of the noisy covariates. 

We finally remark that for the W-SGP and E-SGP models, the prior of the hazards are centred in a Weibull and a Exponential hazard, respectively. Since the synthetic data does not come from those distributions, it will be harder to approximate the true survival function with few data. Indeed, we  observe our models have problems at estimating the survival functions for times close to zero.

\subsection{Real data experiments}

To compare our models we use the so-called concordance index.
The concordance index is a standard measure in survival analysis which estimates how good the model is at ranking survival times. We consider a set of survival times with their respective censoring indices and set of covariates $(T_1,\delta_1,X_1),\ldots,(T_n,\delta_n,X_n)$. On this particular context, we just consider right censoring.

To compute the $C$-index, consider all possible pairs $(T_i,\delta_i,X_i;T_j,\delta_j,X_j)$ for $i\neq j$. We  call a pair admissible if it can be ordered. If both survival times are right-censored i.e. $\delta_i=\delta_j=0$ it is impossible to order them, we have the same problem if the smallest of the survival times in a pair is censored, i.e. $T_i<T_j$ and $\delta_i=0$. All the other cases under this context will be called admissible.\\
Given just covariates $X_i,X_j$ and the status $\delta_i,\delta_j$, the model has to predict if $T_i<T_j$ or the other way around.
We compute the $C$-index by considering  the number of pairs which were correctly sorted by the model, given the covariates, over the number of admissible pairs. A larger $C$-index indicates the model is better at predicting which patient dies first by observing the covariates. If the $C$-index close to $0.5$, it means the prediction made by the model is close to random.

\begin{figure}
  \begin{tabular}{@{}ccc@{}}
        \includegraphics[width=.33\textwidth]{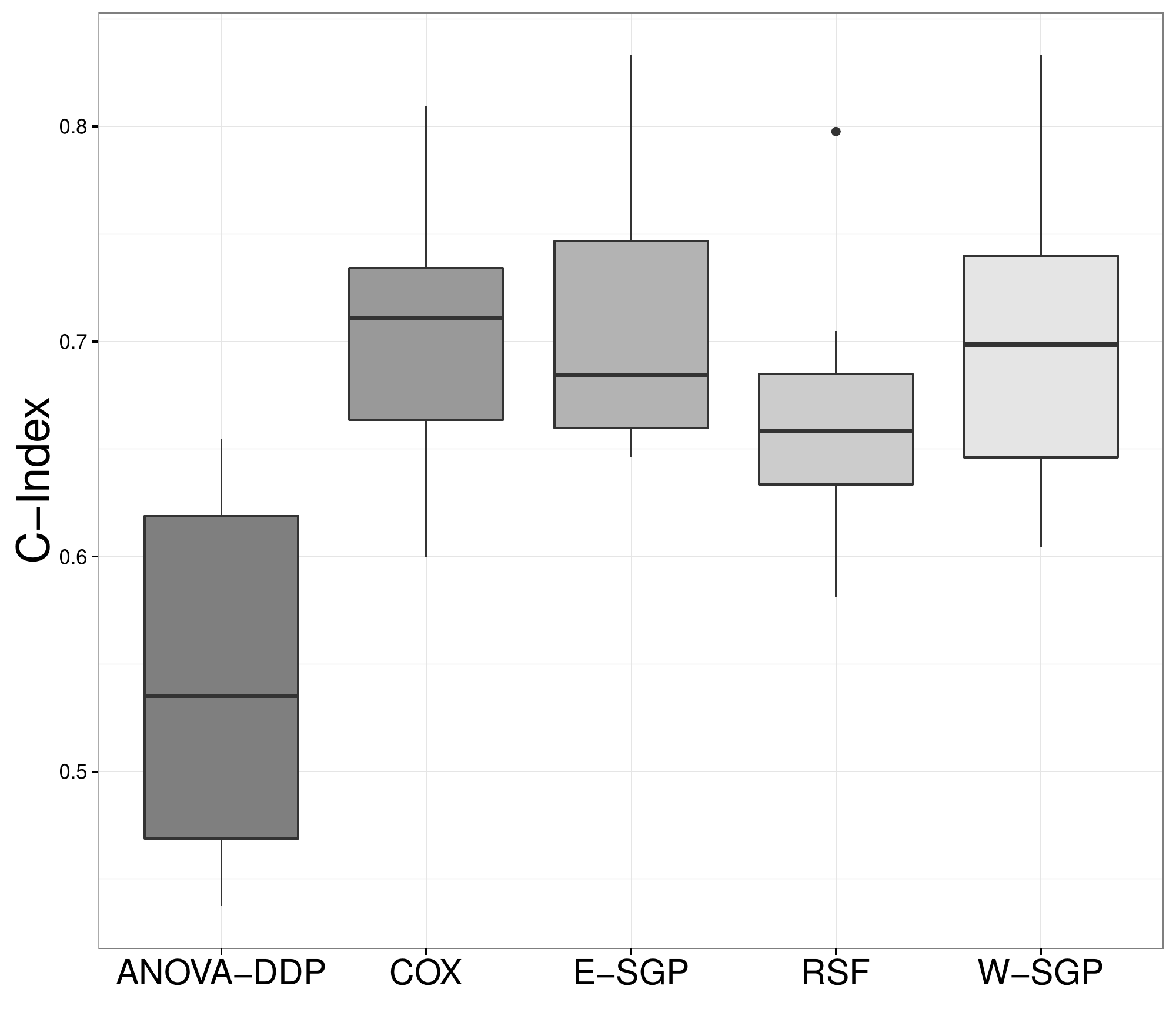}
    \includegraphics[width=.33\textwidth]{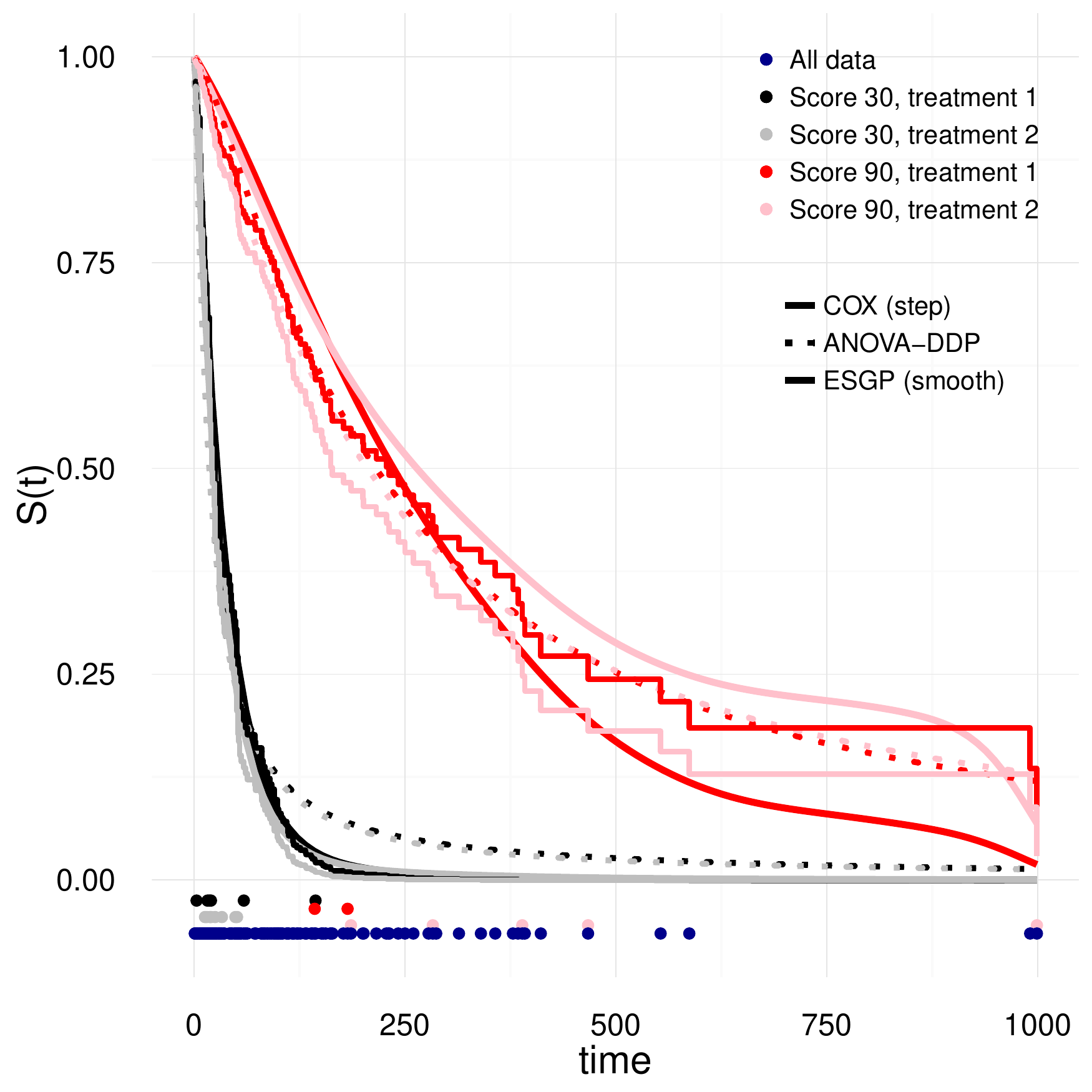}      &
    \includegraphics[width=.33\textwidth]{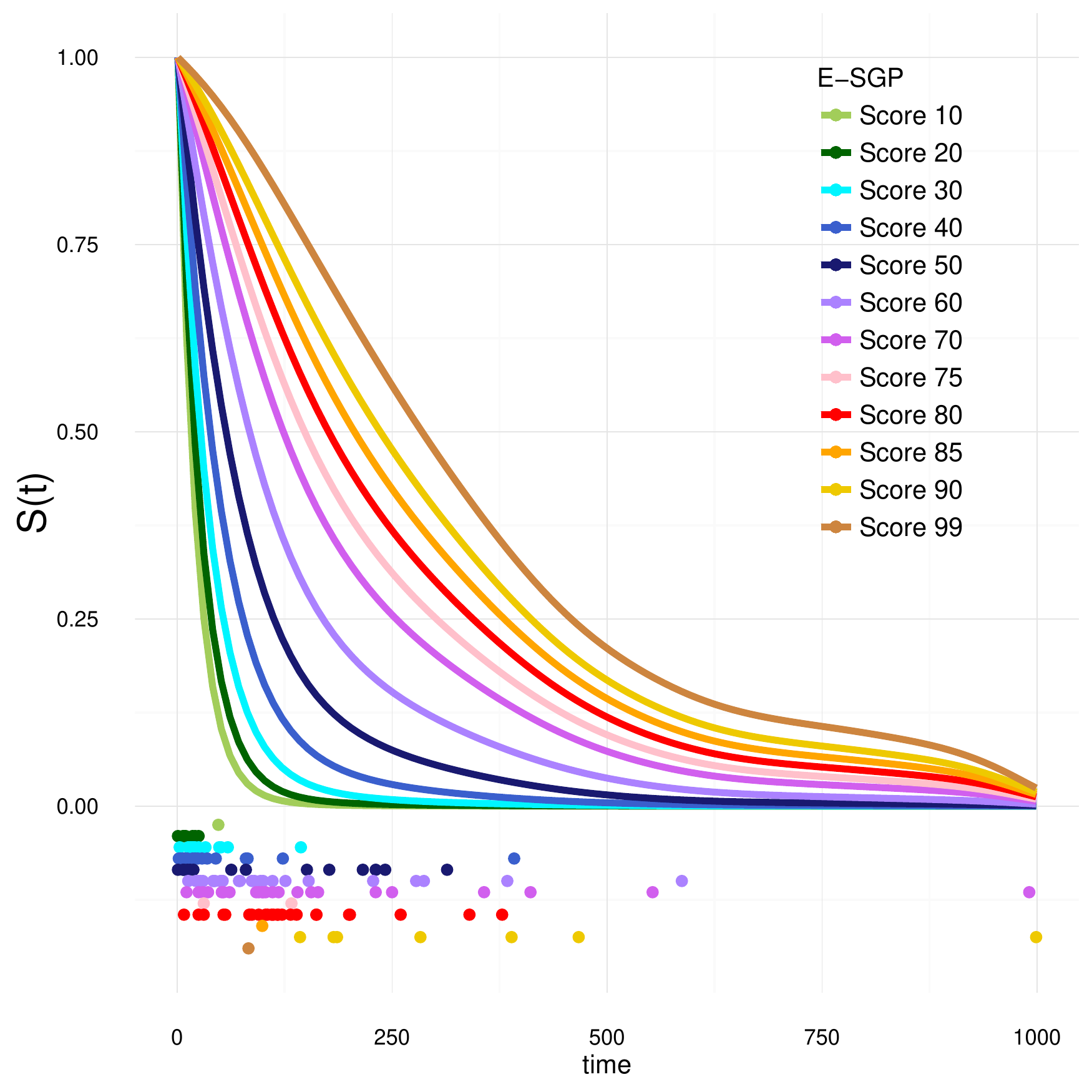} 
  \end{tabular}\caption{\textbf{Left}: C-Index for ANOVA-DDP,COX,E-SGP,RSF,W-SGP; \textbf{Middle}: Survival curves obtained for the combination of score: 30, 90 and treatments: 1 (standard) and 2 (test); \textbf{Right}: Survival curves, using W-SGP, across all scores for fixed treatment 1, diagnosis time 5 moths, age 38 and no prior therapy. (Best viewed in colour)}\label{fig:Cindex}
  \end{figure}

We run experiments on the Veteran data, avaiable in the R-package survival package \cite{therneau2015package}. Veteran consists of a randomized trial of two treatment regimes for lung cancer. It has 137 samples and 5 covariates: \textbf{treatment} indicating the type of treatment of the patients, their \textbf{age}, the \textbf{Karnofsky performance score}, and indicator for \textbf{prior treatment} and \textbf{months from diagnosis}. It contains 9 censored times, corresponding to right censoring.

In the experiment we run our Weibull model (W-SGP) and Exponential model (E-SGP), ANOVA DDP, Cox Proportional Hazard and Random Survival Forest. We perform 10-fold cross validation and compute the $C$-index for each fold. Figure~\ref{fig:Cindex} reports the results.

For this dataset the only significant variable corresponds to the \textbf{Karnofsky performance score}. In particular as the values of this covariate increases, we expect an improved survival time. All the studied models achieve such behaviour and suggest a proportionality relation between the hazards. This is observable in the C-Index boxplot we can observe good results for proportional hazard rates. Nevertheless, our method detect some differences between the treatments when the \textbf{Karnofsky performance score} is 90, as it can be seen in figure~\ref{fig:Cindex}.

For the other competing models we observe an overall good result. In the case of ANOVA-DDP we observe the lowest C-INDEX. In figure~\ref{fig:RSF} we see that ANOVA-DDP seems to be overestimating the Survival function for lower scores. Arguably, our survival curves are more visually pleasant than Cox proportional hazards and Random Survival Trees. 

\begin{figure}
  \begin{tabular}{@{}ccc@{}}
        \includegraphics[width=.33\textwidth]{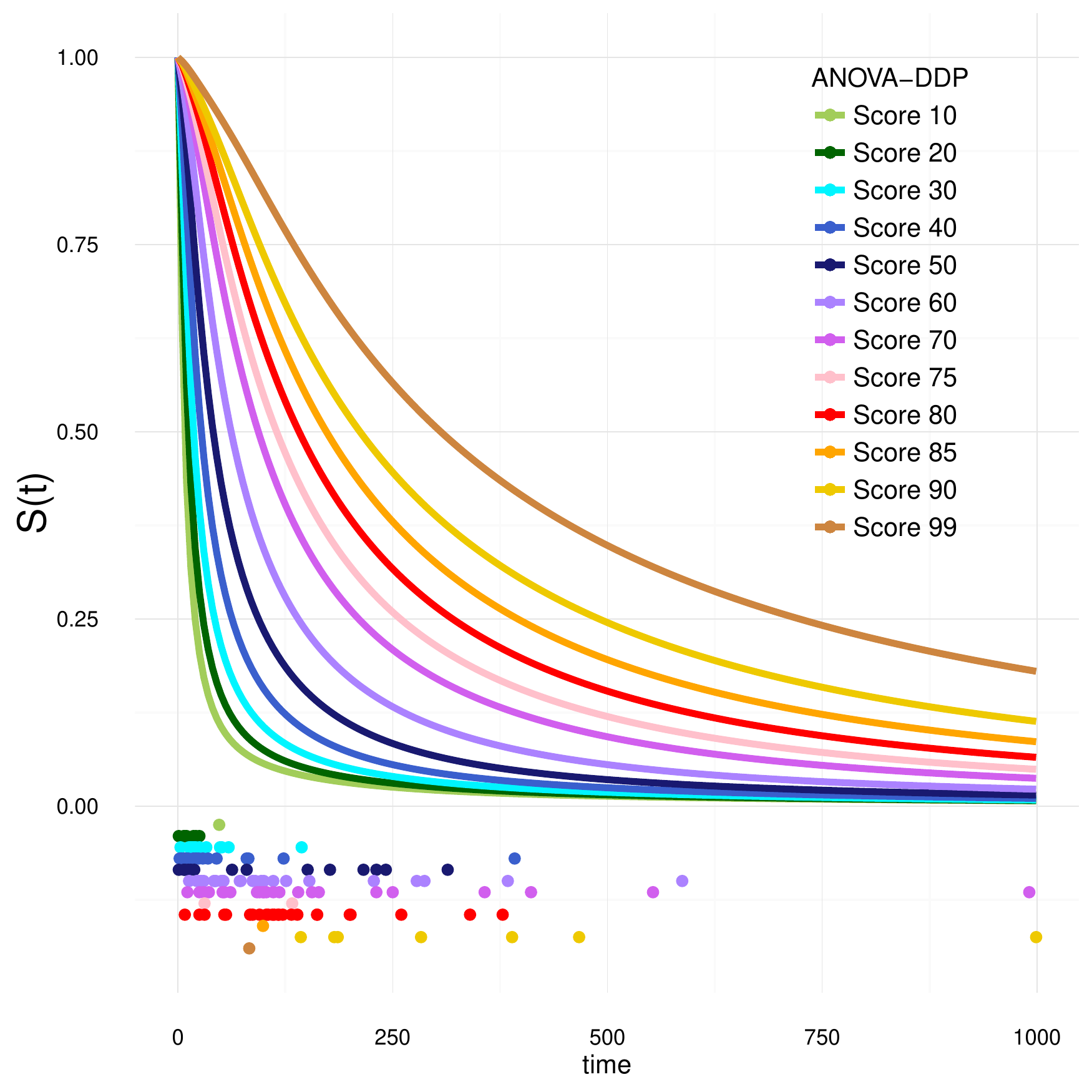}
    \includegraphics[width=.33\textwidth]{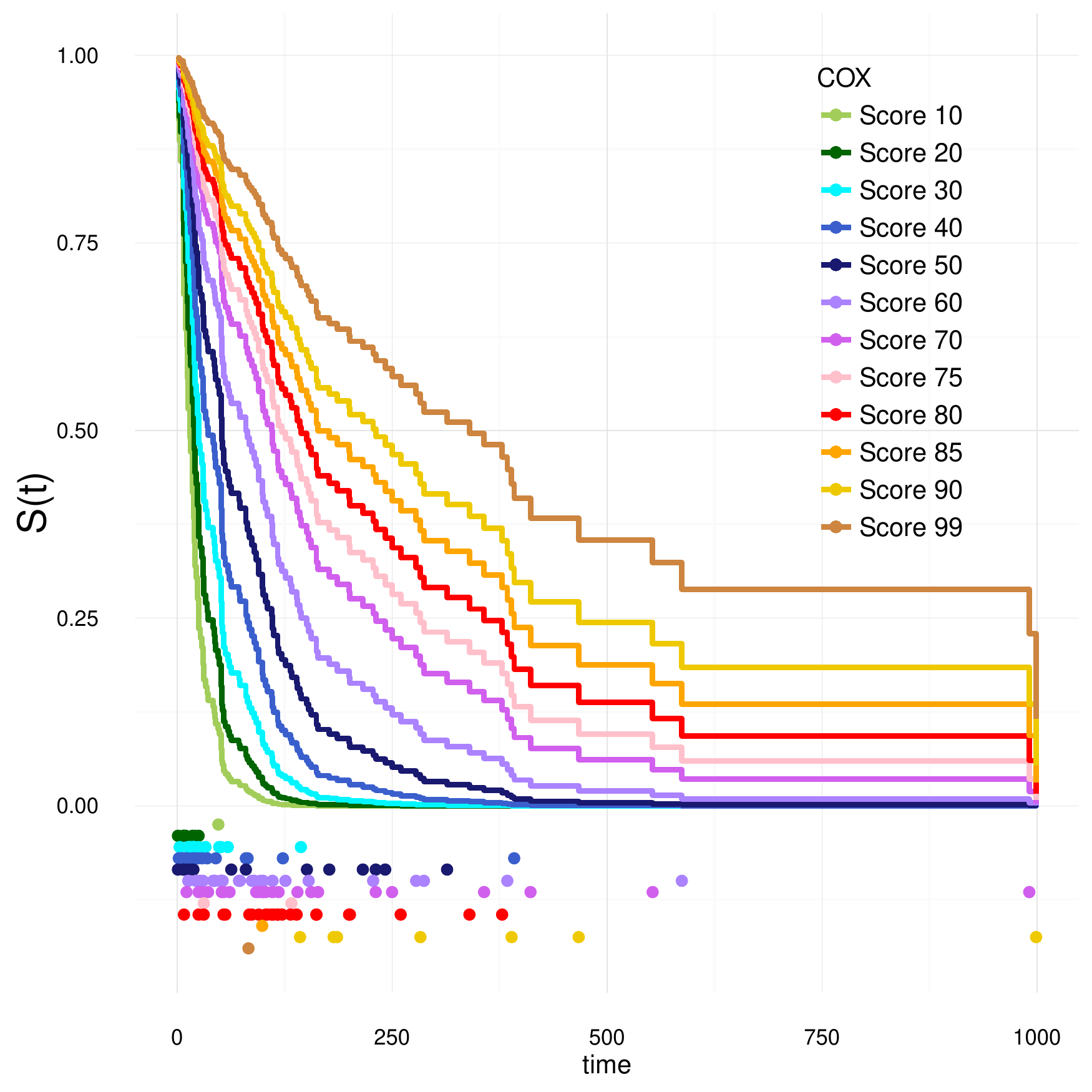}      &
    \includegraphics[width=.33\textwidth]{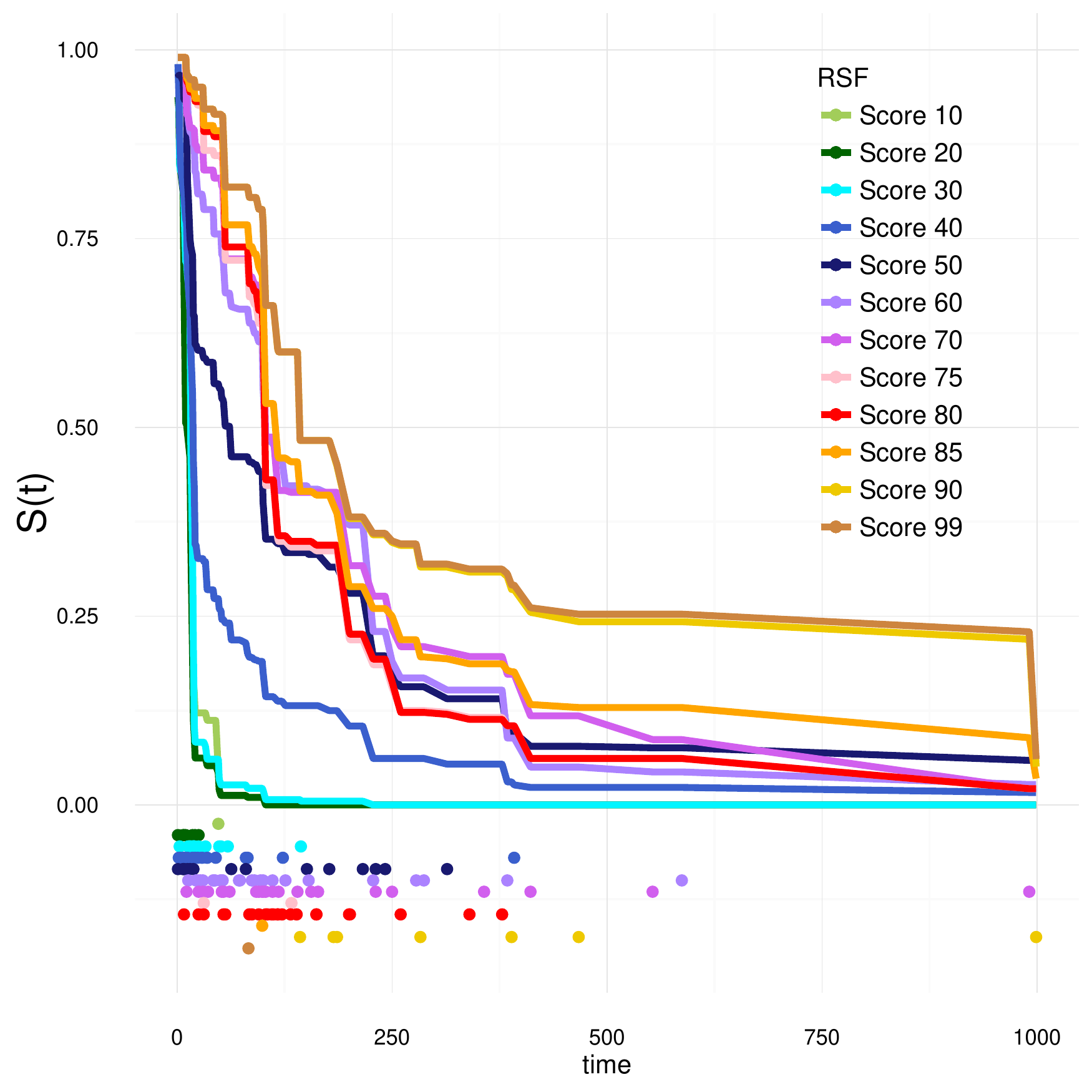} 
  \end{tabular}\caption{Survival curves across all scores for fixed treatment 1, diagnosis time 5 months, age 38 and no prior therapy. \textbf{Left:} ANOVA-DDP; \textbf{Middle:} Cox proportional; \textbf{Right:} Random survival forests.}
  \label{fig:RSF}
  \end{figure}

\section{Discussion}
We introduced a Bayesian semiparametric model for survival analysis. Our model is able to deal with censoring and covariates. In can incorporate a parametric part, in which an expert can incorporate his knowledge via the baseline hazard but, at the same time, the nonparametric part allows the model to be flexible. Future work consist in create a method to choose initial parameter to avoid sensitivity problems at the beginning. Construction of kernels that can be interpreted by an expert is something desirable as well. Finally, even though the random features approximation is a good approach and helped us to run our algorithm in large datasets, it is still not sufficient for datasets with a massive number of covariates, specially if we consider a large number of interactions between covariates.

\subsubsection*{Acknowledgments}
YWT's research leading to these results has received funding from the European Research Council under the European Union's Seventh Framework Programme (FP7/2007-2013) ERC grant agreement no. 617071.
Tamara Fern\'andez and Nicol\'as Rivera were supported by funding from Becas CHILE.
\newpage

\bibliographystyle{IEEEtran}

\newpage
\appendix
\section{Proof of proposition~\ref{propo:ProperSurvival}}

\begin{proof}
Denote with $\Prob$ the probability associated with the gaussian process $(l(t))_{t \geq 0} \sim \GP(0,\kappa)$ and by $\E$ the corresponding expected values.

Remember our (random) hazard is given by $\lambda(s) = \lambda_0(s)\sigma(l(s)) \geq Ks^{\alpha-1}\sigma(l(s)) \geq 0$ for $s \geq 1$. It is well-known that the survival function satisfy can be written as $S(t) = e^{-\int_0^t \lambda(s)ds}$, then
$$S(t) = e^{-\int_0^t \lambda(s)ds} \leq e^{-\int_0^1\lambda(s)-\int_1^t Ks^{\alpha-1}\sigma(l(s))ds} \leq e^{-\int_1^t Ks^{\alpha-1}\sigma(l(s))ds}$$
for $t \geq 1$.

We just need to prove that the latter term tends to 0.
Consider the stochastic process $(X_t)_{t \geq 0}$ given by $X_t = \int_{1}^t Kt^{\alpha-1} \sigma(l(s))ds$.
We compute the expected value and variance of $X_t$. By Tonelli's Theorem we have that

\begin{eqnarray}
\E(X_t) &=& K\E\left(\int_1^t s^{\alpha-1}\sigma(l(s))ds\right) \nonumber\\
&=& K \int_{1}^t  s^{\alpha-1} \E(\sigma(l(s))) ds\nonumber\\
&=&  \frac{K(s^t-1)}{2\alpha}
\end{eqnarray}
In the last equality we used that $\E(\sigma(l(s))) = 1/2$ since $\sigma$ is a symmetric function around $(0,1/2)$ and $l(s)$ is centred.

For the variance, we use Tonelli's Theorem, again, to obtain
\begin{eqnarray}
\Var(X_t) &=& K^2\Var\left(\int_1^t \sigma(l(s))x^{\alpha-1} ds\right)\nonumber\\
&=&K^2 \int_1^t \int_1^t \Cov(\sigma(l(x)),\sigma(l(y))) (xy)^{\alpha-1} dxdy
\end{eqnarray}

We separate the last integral in two pieces, one integrating the region $A=\{t,s \in [1,t]:|t-s|<1\}$ and its complement on $[1,t]^2$

 In the region $A$ we use that $\Cov(\sigma(l(x)),\sigma(l(y))) \leq \sqrt{\var(\sigma(l(x)))\var(\sigma(l(y)))} = \var(\sigma(l(0)))$ since $(l(t))_{t \geq 0}$ is stationary. Note that $\var(\sigma(l(0)))<1$ because $|\sigma(l(0))|\leq 1$. Then
  \begin{eqnarray}
  \int_{A} \Cov(\sigma(l(x)),\sigma(l(y))) (xy)^{\alpha-1}dxdy &\leq & \int_{A}  (xy)^{\alpha-1}dxdy
 \end{eqnarray}
a tedious computation gives us
 \begin{eqnarray}
 \int_{A} \Var(\sigma(l(x))) (xy)^{\alpha-1}dxdy \leq C\frac{(t+1)^{2\alpha-1}}{2\alpha-1}
 \end{eqnarray}
 for some constant $C>0$.

 The  integral in $A^c$ can be computed using the following bound, valid for any $|t-s|>1$,
 $$\Cov(\sigma(l(x)),\sigma(l(y))) \leq 2\frac{ \kappa(|x-y|)\E(\sigma(l(x)))^2}{\kappa(0)-\kappa(1)}.$$ 
 The proof of the above inequality is given in Lemma~\ref{lemma:lem1}.
Therefore, by denoting by $C$ a large enough constant we have
\begin{eqnarray}
\int_{A^c} \Cov(\sigma(l(x)),\sigma(l(y))) (xy)^{\alpha-1}dxdy &\leq& \int_{A^c} 2\frac{ \kappa(|x-y|)\E(\sigma(l(x)))^2}{\kappa(0)-\kappa(1)} (xy)^{\alpha-1}dxdy \\
&\leq& C\int_{1}^{t}\int_{x+1}^{t}\kappa(x-y)(xy)^{\alpha-1}dydx\label{eqn:Cint}
\end{eqnarray}

Using the change of variables $w=x$ and $z = x-y$ we get from equation~\ref{eqn:Cint} that
\begin{eqnarray}
\int_{A^c} \Cov(\sigma(l(x)),\sigma(l(y))) (xy)^{\alpha-1}dxdy &\leq& C  \int_{1}^t \int_{z}^t\kappa(z)w^{\alpha-1}(w-z)^{\alpha-1} dw dz\\
&\leq& C\int_{1}^t\int_{1}^t\kappa(z)w^{2\alpha-2}dw dz\\
&\leq& C\frac{t^{2\alpha-1}}{2\alpha}\int_{0}^t \kappa(z)dz
\end{eqnarray}
Adding the integral on $A$ with the one over $A^c$, we get that it exists a large constant $C>0$, depending on $\alpha$ such that for large enough $t$ it holds
\begin{eqnarray}
\Var(X_t) \leq Ct^{2\alpha-1}\int_{0}^t\kappa(s)ds
\end{eqnarray}
Then, by Chebyshev's inequality we get 
\begin{eqnarray}\label{eqn:probto0}
\Prob(|X_t-\E(X_t)| \geq \E(X_t)/2)\leq \frac{4\Var(X_t)}{\E(X_t)^2} = \bigO\left(\frac{ t^{2\alpha-1} \int_0^t \kappa(s)ds}{t^\alpha} \right) = \frac{o(t)}{t}
\end{eqnarray}

Let $B_t$ be the event $B_t = \{|X_t - \E(X_t)| \geq \E(X_t)/2\}$. Let $(t_n)_{n \geq 1}$ be an increasing succession of times, such that $\Prob(B_{t_n}) \leq n^{-2}$ and $t_n \to \infty$ as $n$ tends to $\infty$. Observe it is always possible to find such $t_n$ because equation~\eqref{eqn:probto0}. Observe $\sum_{n \geq 1} \Prob(B_{t_n}) \leq \infty$, then by using the Borel-Cantelli Lemma it holds that all but finite event $B_{t_n}$ holds. Then it exists $N$ such that for all $n \geq N$ we have
$$|X_{t_n}-\E(X_{t_n})| \geq \E(X_{t_n})/2,$$
implying that
$$X_{t_n} \geq \E(X_{t_n})/2.$$
From there, for $n \geq N$ we have
$$S(t_n) \leq e^{-X_{t_n}} \leq e^{-\E(X_{t_n})/2} = e^{-c{t_n}^{\alpha}},$$
for a small constant $c>0$, independent of $t_n$.
Then since $S(t)$ is decreasing it holds
$$\lim_{t \to \infty} S(t) = \lim_{n \to \infty} S(t_{n}) \leq \lim_{n \to \infty}e^{-ct_n^{\alpha}} = 0.$$

\end{proof}

\begin{lemma}\label{lemma:lem1}
For any $t,s$ such that $|t-s|>1$ we have 
$$\Cov(\sigma(l(x)),\sigma(l(y))) \leq 2\frac{ \kappa(|x-y|)\E(\sigma(l(x)))^2}{\kappa(0)-\kappa(1)}.$$
\end{lemma}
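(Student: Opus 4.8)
The plan is to exploit the joint Gaussianity of the pair $(l(x),l(y))$ and reduce the statement to a bound on the covariance of a fixed bounded function (the sigmoid $\sigma$) evaluated at two correlated Gaussians. Since $(l(t))_{t\ge0}$ is centred and stationary, $(l(x),l(y))$ is a centred bivariate Gaussian with common variance $\kappa(0)$ and covariance $\kappa(|x-y|)$, so its correlation is $\rho=\kappa(|x-y|)/\kappa(0)$. First I would expand $\sigma$ in the orthonormal Hermite basis $(h_k)_{k\ge0}$ associated with the marginal law $\Normal(0,\kappa(0))$, writing $\sigma=\sum_{k\ge0}a_k h_k$ with $a_0=\E(\sigma(l(x)))$; this is legitimate because $\sigma$ is bounded, hence square-integrable against the Gaussian measure.

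The core computation is Mehler's formula, which gives $\E(h_j(l(x))h_k(l(y)))=\delta_{jk}\rho^{k}$ and therefore
$$\Cov(\sigma(l(x)),\sigma(l(y)))=\sum_{k\ge1}a_k^2\rho^{k}.$$
From here the proof rests on two quantitative inputs. The first is a uniform bound on the Hermite coefficients: writing $a_0=\E(\sigma(l(x)))=1/2$ (the sigmoid is symmetric about $(0,1/2)$ and $l$ is centred) and using $\sigma^2\le\sigma$ since $0\le\sigma\le1$, one obtains $\sum_{k\ge1}a_k^2=\Var(\sigma(l(x)))\le\E(\sigma(l(x))^2)\le\E(\sigma(l(x)))=2\,\E(\sigma(l(x)))^2=2a_0^2$, which forces $a_k^2\le 2a_0^2$ for every $k\ge1$. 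The second input is where the hypothesis $|x-y|>1$ enters: under the standing assumption that $\kappa$ is decreasing, $|x-y|>1$ yields $\kappa(|x-y|)\le\kappa(1)$, hence $\rho\le\rho_0:=\kappa(1)/\kappa(0)<1$, so the correlation is bounded away from $1$.

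Combining these, I would factor out one power of $\rho$ and sum a geometric series,
$$\sum_{k\ge1}a_k^2\rho^{k}=\rho\sum_{k\ge1}a_k^2\rho^{\,k-1}\le \rho\sum_{k\ge1}2a_0^2\rho_0^{\,k-1}=\frac{2a_0^2\rho}{1-\rho_0},$$
and finally substitute $\rho=\kappa(|x-y|)/\kappa(0)$ and $1-\rho_0=(\kappa(0)-\kappa(1))/\kappa(0)$; the factors of $\kappa(0)$ cancel and, recalling $a_0=\E(\sigma(l(x)))$, this is exactly the claimed inequality, with the constant $2$ emerging from the coefficient bound $a_k^2\le 2a_0^2$.

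The step I expect to be most delicate is controlling the tail of the series so that it produces the specific denominator $\kappa(0)-\kappa(1)$ rather than a weaker estimate. The naive bound $\rho^{k}\le\rho$ only yields $\Var(\sigma(l(x)))\,\rho$ and loses the geometric improvement; it is essential to replace $\rho^{k-1}$ by $\rho_0^{k-1}$ and then apply the per-coefficient bound $a_k^2\le 2a_0^2$, so that the geometric series sums to $(1-\rho_0)^{-1}$. A secondary point to verify is the sign and range of $\rho$: the argument as written assumes $\rho\in[0,\rho_0]$, which is guaranteed by the proposition's hypothesis that $\kappa$ decreases to $0$ (so $\kappa\ge0$ and $\rho\ge0$); allowing negative correlations would require restating the geometric bound in terms of $|\rho|$.
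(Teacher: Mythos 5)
Your proof is correct, but it takes a genuinely different route from the paper's. The paper argues directly on the bivariate Gaussian density of $(l(x),l(y))$: it bounds the cross term in the exponent via $2xy\le x^2+y^2$ (a step that silently requires $\kappa\ge 0$), which decouples the double integral into a product of one-dimensional Gaussian integrals, and after tracking normalising constants arrives at $\E(\sigma(l(x))\sigma(l(y)))\le \frac{\kappa(0)+\kappa(|x-y|)}{\kappa(0)-\kappa(|x-y|)}\,\E(\sigma(l(0)))^2$; subtracting $\E(\sigma(l(0)))^2$ and using $\kappa(|x-y|)\le\kappa(1)$ for $|x-y|>1$ yields the lemma. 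You instead diagonalise the joint law in the Hermite basis via Mehler's formula, $\Cov(\sigma(l(x)),\sigma(l(y)))=\sum_{k\ge1}a_k^2\rho^k$ with $\rho=\kappa(|x-y|)/\kappa(0)$, and close with the coefficient bound $\sum_{k\ge1}a_k^2\le 2a_0^2$ (from $\sigma^2\le\sigma$ and $a_0=\E(\sigma(l(x)))=1/2$) together with $\rho\le\rho_0=\kappa(1)/\kappa(0)<1$; all steps check out, and to your credit you make explicit the positivity and monotonicity assumptions on $\kappa$ that both proofs need but the paper leaves implicit. One correction to your closing commentary, though: the step you single out as delicate is in fact unnecessary, and your hierarchy of bounds is inverted. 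Since $0\le\rho\le 1$, the ``naive'' estimate $\rho^k\le\rho$ for $k\ge1$ already gives $\Cov(\sigma(l(x)),\sigma(l(y)))\le\rho\sum_{k\ge1}a_k^2\le 2a_0^2\rho = 2\,\E(\sigma(l(x)))^2\,\kappa(|x-y|)/\kappa(0)$, which is \emph{stronger} than the stated lemma (the denominator $\kappa(0)$ dominates $\kappa(0)-\kappa(1)$) and does not even use the hypothesis $|x-y|>1$; your geometric-series detour through $\rho_0$ is valid but only recovers the weaker stated constant. In short, the Mehler route buys a sharper inequality with fewer hypotheses, while the paper's density manipulation is more elementary and self-contained, requiring no Hermite machinery.
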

\begin{proof}

Assume $t>s$. Using that $xy\leq\frac{x^2+y^2}{2}$ we have
\begin{eqnarray}
\E(\sigma(l(t)\sigma(l(s)))&=&\int_{-\infty}^{\infty}\int_{-\infty}^{\infty}
\sigma(x)\sigma(y)\frac{\exp\left\{-\frac{\kappa(0)(x^2+y^2)-2\kappa(t-s)xy}{2(\kappa(0)^2-\kappa(t-s)^2)}\right\}}{2\pi(\kappa(0)^2-\kappa(t-s)^2)^{1/2}}dxdy\nonumber\\
&\leq&\int_{-\infty}^{\infty}\int_{-\infty}^{\infty}
\sigma(x)\sigma(y)\frac{\exp\left\{-\frac{(\kappa(0)-\kappa(t-s))(x^2+y^2)}{2(\kappa(0)^2-\kappa(t-s)^2)}\right\}}{2\pi(\kappa(0)^2-\kappa(t-s)^2)^{1/2}}dxdy\nonumber\\
&\leq&\int_{-\infty}^{\infty}\int_{-\infty}^{\infty}
\sigma(x)\sigma(y)\frac{\exp\left\{-\frac{(x^2+y^2)}{2(\kappa(0)+\kappa(t-s))}\right\}}{2\pi(\kappa(0)^2-\kappa(t-s)^2)^{1/2}}dxdy\nonumber\\
&=&\frac{\kappa(0)+\kappa(t-s)}{{\sqrt{\kappa(0)^2-\kappa(t-s)^2}}}\int_{-\infty}^{\infty}\int_{-\infty}^{\infty}
\sigma(x)\sigma(y)\frac{\exp\left\{-\frac{(x^2+y^2)}{2(\kappa(0)+\kappa(t-s))}\right\}}{2\pi(\kappa(0)+\kappa(t-s))}dxdy\nonumber\\
&\leq& \frac{\kappa(0)+\kappa(t-s)}{\sqrt{\kappa(0)^2-\kappa(t-s)^2}}\left(\int_{-\infty}^{\infty}
\sigma(x)\frac{\exp\left\{-\frac{x^2}{2(\kappa(0)+\kappa(t-s))}\right\}}{\sqrt{2\pi(\kappa(0)+\kappa(t-s))}}dx\right)^2\nonumber\\
&=&\left(\frac{\kappa(0)+\kappa(t-s)}{\kappa(0)-\kappa(t-s)}\right)^{1/2}\mathbb{E}(\sigma(l(0)))^2 \leq \frac{\kappa(0)+\kappa(t-s)}{\kappa(0)-\kappa(t-s)}\mathbb{E}(\sigma(l(0)))^2 .\nonumber
\end{eqnarray}
Note the last integral is the expected value of the sigmoid function of a Normal random variable with mean 0 and variance $\kappa(0)+\kappa(t-s)$. By the symmetry of the sigmoid function, this expected value is the same as if we have variance $\kappa(0)$. Finally, by deleting $\E(\sigma(l(0)))^2$ in both sides of the above equation, we get the desired result.
\end{proof}

\newpage

\end{document}